\newcommand{\ignore}[1]{}
\newtheorem{proposition}{Proposition}
\newtheorem{corollary}{Corollary}
\newtheorem{proof}{Proof}
\begin{document}
%
\title{OCRAPOSE II: An OCR-based indoor positioning system using mobile phone images}

%
%
%
%

\author{Hamed~Sadeghi, Shahrokh~Valaee~and~Shahram~Shirani


\thanks{H. Sadeghi and S. Valaee are with the Department
of Electrical and Computer Engineering, University of Toronto, ON, Canada, M5S 2E4.\protect\\
E-mail: hsadeghi,valaee@ece.utoronto.ca.}


\thanks{S. Shirani is with the Department of Electrical and Computer Engineering, McMaster University, ON, Canada, L8S 4L8.
Email: shirani@mcmaster.ca.}}

\IEEEtitleabstractindextext{
\begin{abstract}
In this paper, we propose an OCR (optical character recognition)-based localization system called OCRAPOSE II, which is applicable in a number of indoor scenarios including office buildings, parkings, airports, grocery stores, etc. In these scenarios, characters (i.e. texts or numbers) can be used as suitable distinctive landmarks for localization. The proposed system takes advantage of OCR to read these characters in the query still images and provides a rough location estimate using a floor plan. Then, it finds depth and angle-of-view of the query using the information provided by the OCR engine in order to refine the location estimate.
We derive novel formulas for the query angle-of-view and depth estimation using image line segments and the OCR box information. 
We demonstrate the applicability and effectiveness of the proposed system through experiments in indoor scenarios. It is shown that our system demonstrates better performance compared to the state-of-the-art benchmarks in terms of location recognition rate and average localization error specially under sparse database condition.
\end{abstract}

\begin{IEEEkeywords}
Indoor localization, depth estimation, angle-of-view estimation, OCR, vanishing point.
\end{IEEEkeywords}}

\maketitle

\IEEEdisplaynontitleabstractindextext

%
\IEEEpeerreviewmaketitle

%
%
%
%

%
%


\IEEEraisesectionheading{\section{Introduction}\label{sec:introduction}}

\IEEEPARstart{T}{he} most prevalent method for indoor localization is based on fusion of Wi-Fi RSS fingerprints with inertial sensors data \cite{pami1} \ignore{,RSS+inertial,RSS+inertial2}. These methods require a fair number of Wi-Fi access points to be visible at each location \cite{chen}. Under these conditions, they demonstrate localization errors about 2 meters, where training points have granularity of $1 $ meter \cite{chen2}.
In scenarios where enough Wi-Fi access points are not available; access to Wi-Fi RSS reader hardware is blocked (such as iPhones), or when greater localization accuracy (ex. sub-meter accuracy) is required, image-based methods can be used as an effective solution.

Although image-based localization has been studied for a long time in the fields of robotics \cite{pami2,sadeghi} and augmented reality \cite{paucher10,pami3}, it has only been pursued over the past decade for mobile phones in indoor scenarios \cite{sextant}. 
The proposed methods can be categorized into two classes \cite{categorize}, image retrieval-based (fingerprinting-based) \cite{Liang2013,Hamed2} and landmark-based (e.g. logo-based) \cite{Kim10,Hamed1}. 

Both categories of image-based localization methods, i.e. landmark-based and image retrieval-based, require a database of images as well as 3D coordinates/locations  to be measured and stored in the training phase. The data gathering is labor intensive, which is not always possible. 
Furthermore, most of the methods proposed in the literature, utilize \emph{feature extraction and matching} for localization \cite{Liang2013, Hamed1}, while feature extraction and corresponding database creation for a large environment is highly time consuming. Moreover, simple feature matching is not robust to large changes in angle-of-view (AOV). That is, almost the same scene (i.e. set of objects) present in two images cannot be acceptably matched if the difference in AOV is large \cite{MSER, wide1}. Methods such as ASIFT (affine SIFT) \cite{ASIFT}, which are robust to AOV changes have high complexity. For instance, the complexity of ASIFT is at least 1.5 times of that of SIFT \cite{ASIFT2}.
More importantly, we have observed that in a number of indoor environments such as office buildings, parkings, airports, grocery stores, etc., where the distinctive landmarks are text and/or numbers (characters in general), the aforementioned methods fail to provide good location recognition performance for a considerable percentage of queries. 

In scenarios with large image databases, stereo feature matching is computationally expensive and is not performed for best matches detection in the image retrieval-based methods. Instead, bag of features-based methods are used to find the best matching landmark/image(s). 
Here, we use stereo feature matching to illustrate the existence of confusing similar features and lack of distinctive features as the main reasons of failure in best match detection (retrieval).
 Fig. \ref{fig:fail} depicts a university building scenario, where existing literature methods demonstrate poor localization performance.
\begin{figure}[!t]
\centering
\setlength{\fboxsep}{0pt}
\setlength{\fboxrule}{0pt}
\subfloat[]{\label{fig:logofails}\includegraphics
[width=0.45\textwidth,height=0.35\textheight]{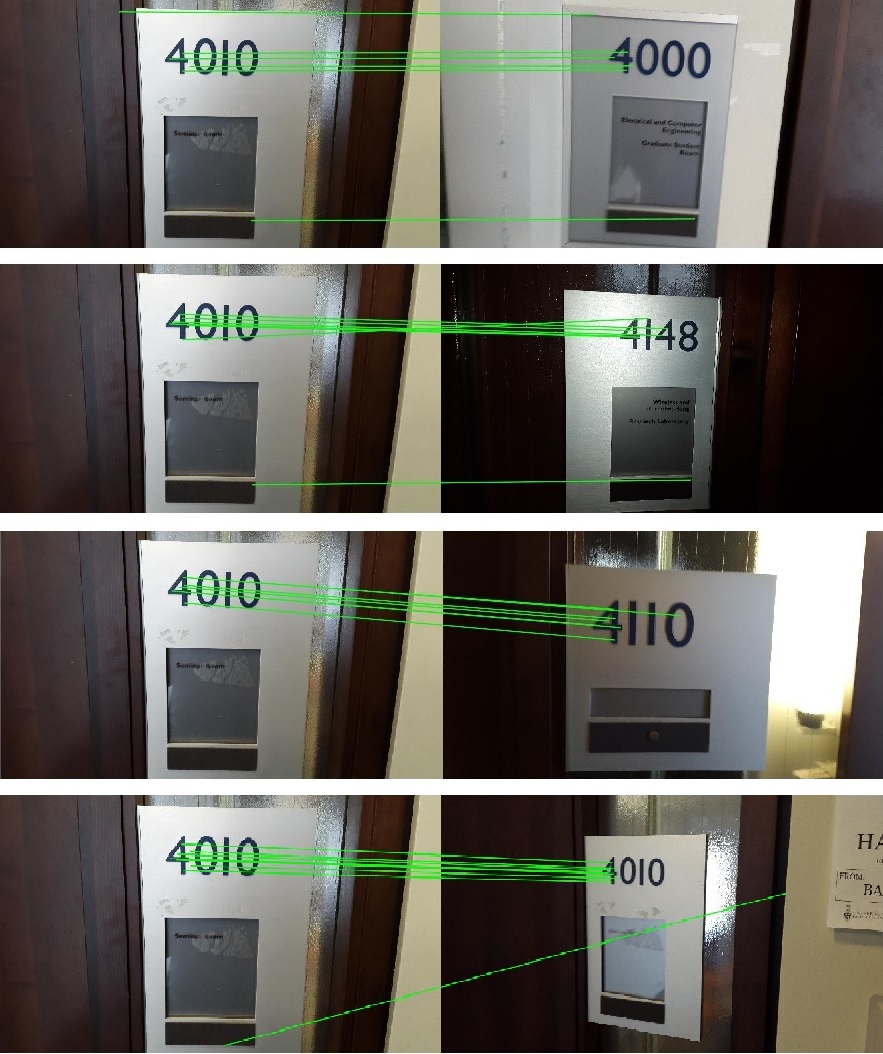}}\\
\subfloat[]{\label{fig:CBIRfails}\fbox{\includegraphics
[width=0.45\textwidth,height=0.4\textheight]{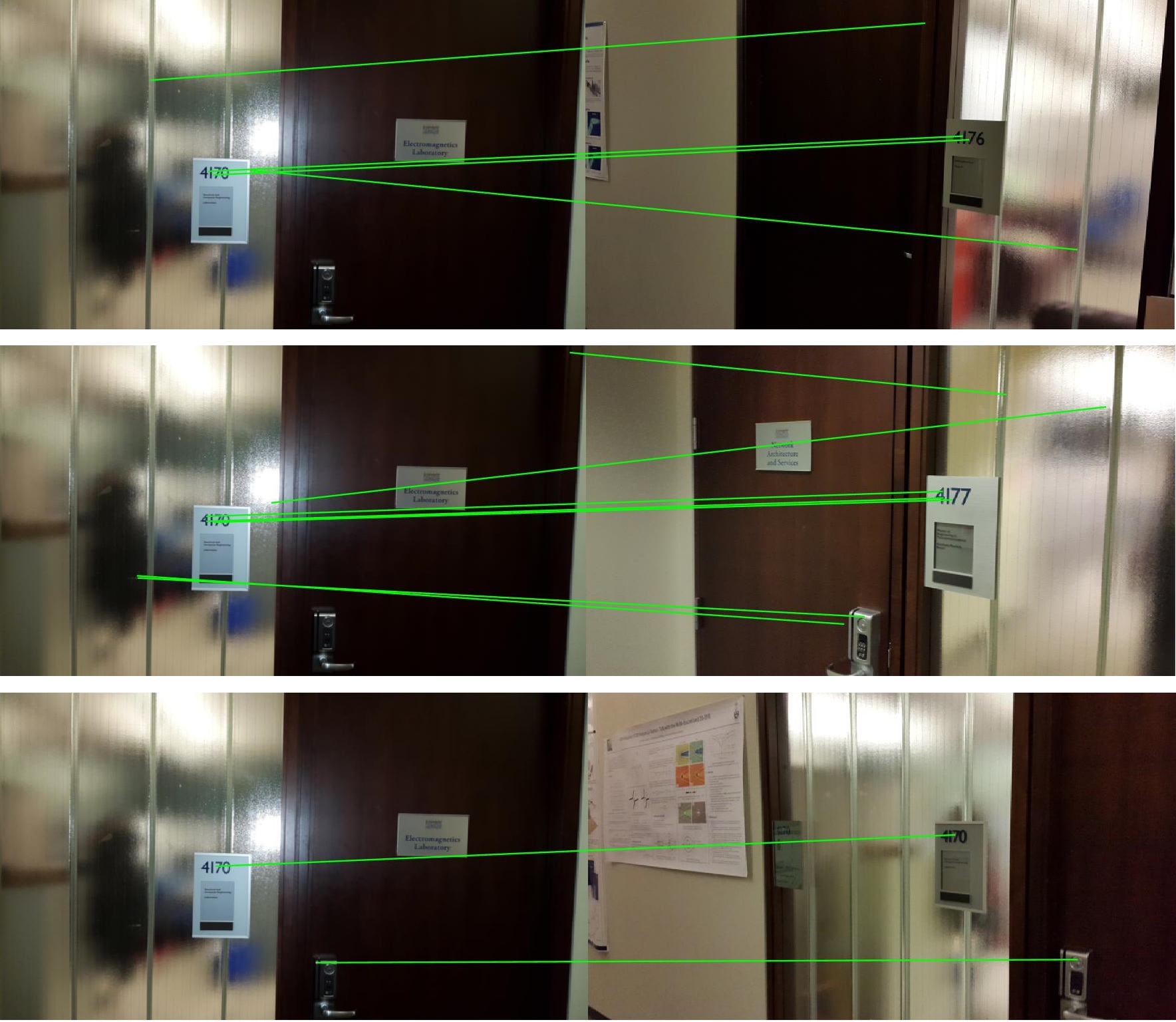}}}
\caption{Sample scenarios, where (a) landmark-based and (b) image retrieval-based methods fail to find the correct match due to the lack of enough distinctive features; the left-hand images are the same query images, the right-hand ones are different database images containing different numbers}
\label{fig:fail} 
\end{figure}

The reason of failure in detecting the correct existing characters using the landmark-based methods is that the characters (numbers) are not as textured as commonly used landmarks such as commercial logos or fiducial markers \cite{Kim10}. Hence, point feature-based recognition approaches \cite{visibility} fail to extract enough distinctive features required to distinguish different numbers from each other. For instance, Fig. \ref{fig:logofails} illustrates why stereo feature matching cannot distinguish which one of the existing database numbers, i.e. $4000$, $4148$, $4110$ or $4010$ corresponds to the query number ($4010$). As seen, although digit $4$ is common among all different numbers, most of features are concentrated around it. This confuses the feature-based recognizer and it cannot find the correct one, i.e. the last bottom one.
 The same problem exists when we want to distinguish between gate numbers in airports, for instance between gate numbers  $\fbox{B42}$ and $\fbox{B43}$. 

Furthermore, there is also a high probability of seeing similar (repeated) scenery from different locations in the mentioned scenarios. Hence, image retrieval-based methods might frequently fail to find the best matching image, i.e. the closest location to the query.
As seen in Fig. \ref{fig:CBIRfails}, the last bottom image, which is the correct database image containing the same numbers ($4170$) has achieved the minimum number of common matching features due to the huge difference in the angle-of-view. As seen, for the selected matching threshold, the door handle has got matched as well as the digit $1$.
On the other hand, although the middle right image contains the number $4177$, it has gained the maximum number of common matches with the query due to a angle-of-view similar to that of the query.

Other image retrieval techniques such as bag of features or localization-specific ones \cite{Liang2013} might also fail to retrieve the correct matching characters. This is due to the existence of more similar (confusing) objects (features) rather than distinctive ones. Doors, windows, door handles, etc are among such objects in our scenario (Fig. \ref{fig:CBIRfails}). We evaluate the recognition performance of the Liang's method \cite{Liang2013} as an example in our experiments.
It should be noted that the the image-retrieval based methods mentioned here use general point feature-based retrieval techniques or retrieval methods designed for location recognition. Use of point feature-based techniques proposed originally for character recognition such as \cite{masa11} will result in an OCR-based localization method, which resides in the same category as our method.

The mentioned issues, i.e. labor extensiveness and poor recognition performance of the conventional point feature-based methods, motivate the use of OCR to recognize the existing location-distinctive characters. By recognizing these characters in the query image, OCR can provide a rough location estimate using a building floor plan.
We assume a floor plan tagged with location-distinctive characters and their locations. An example is seen in Fig. \ref{fig:SF4_plan}. We do not use non location-distinctive characters shared at different locations such as exit signs that potentially confuse the localizer. Furthermore, we only rely on stable characters such as room/gate numbers rather than the texts/numbers on the bulletin boards, which could be replaced or removed after a while. Besides location recognition, OCR provides some clues for fine localization as will be demonstrated later. Hence, the proposed image-based system performs location recognition and location estimation based on OCR.


We do not collect a location-tagged image database for localization. Our system only requires a floor plan tagged with characters locations and the width of the OCR boxes, which can be measured or determined based on typical values. Furthermore, it does not require any feature detection, extraction or matching.
Another characteristic of the proposed localization system is that it can be integrated with other feature-based systems, either landmark-based or image retrieval-based. In \cite{Hamed3}, we demonstrated how OCR can improve a landmark-based localization system in terms of location recognition and localization accuracy.
It should be noted that we are not proposing a character detection/recognition algorithm here. Instead, we focus on OCR-based location recognition as well as the fine localization of user using two novel formulas for depth and angle-of-view estimation. 
Our contributions are three-fold
\begin{itemize}
\item An indoor localization system, called OCRAPOSE II, based on OCR

\item A novel  formula for query angle-of-view estimation

\item A novel formula for query depth estimation

\end{itemize}

Taking advantage of OCR for rough localization and utilizing the proposed formulas to refine the estimate, we provide better localization results in two sample scenarios. The OCRAPOSE II results are better than that of the state-of-the-art in terms of location recognition rate and mean localization error. Furthermore, it shown that the proposed system localization performance is maintained under sparse database locations condition, while that of the benchmarks degrades significantly.

This paper is organized as follows. Section \ref{sec:related} briefly introduces the related literature work and explains the benchmarks in details. Section \ref{sec:system} introduces the proposed system and explains its details.
Section \ref{sec:exp} compares the performance of proposed system with the benchmarks through extensive experiments in two scenarios. Finally, Section \ref{sec:conclusion} concludes the paper.

\section{Related work}
\label{sec:related}

Although image-based robot localization and augmented reality applications have rich literature, image-based user localization has been an active field of research only over the last decade. As explained, the proposed image-based methods can be categorized in two classes. Here, we provide some examples from each category and explain a few methods related works as well as the selected benchmarks in more details. 

Image retrieval-based methods can be applied in general scenarios. These methods perform very well in applications with abundant distinctive landmarks such as shopping malls \cite{Liang2013} or scenarios with unrepeated scenery such as outdoors (using Google street view database) \cite{Torii}. 
They work on the basis that visually different images are taken from different positions. That is, the captured image works as the fingerprint of its (camera) location. Due to this implicit assumption, these methods give poor location estimates in  environments with repeated scenery such as office buildings, parkings, airports, etc. Moreover, a large database of images is required to be collected and Geo-tagged (pose-tagged \cite{Hamed2}) in the training phase. Furthermore, in the test phase, the query image is compared with the entire (or part of) database to find the best match \cite{Liang2013} or best pair \cite{Torii}. Hence, a considerable processing time is required to perform localization during  both training and query phases.

Landmark-based methods are applicable in scenarios, where \emph{highly textured} and \emph{distinctive} landmarks are present \cite{Liang2013, Hamed2}.  
These methods use logo/landmark detection techniques \cite{LOGO_TIP,LOGO} to detect existing landmarks/logos in the query image. Afterwards, the previously measured 3D coordinates of the detected landmark image is used to estimate the camera (full) matrix. The query location can be obtained afterwards from the estimated camera matrix. 
For instance, in shopping malls, where textured commercial logos are ubiquitous, these methods can provide great localization accuracy \cite{Liang2013}. Texture is usually required to extract feature points necessary for unique logo detection. In terms of database size, in contrary to image retrieval-based methods, landmark-based methods require a small image database of existing landmarks as well as some of their actual 3D coordinates. Therefore, their running time is much less than the image retrieval-based methods.


Among a few works proposed in the literature that utilize OCR for localization, \cite{Orlosky14} can be mentioned for its explicit use of OCR as a rough localizer. In fact, it proposes an OCR-assisted multi sensor method for navigation in emergency indoor scenarios. 
The proposed method only provides rough location estimates and no fine localization method has  been proposed.

A good indoor localization system that we use as one of our benchmarks is proposed in \cite{Liang2013} that uses CBIR (content-based image retrieval) to perform rough localization. It requires a location-tagged database. In the training phase, SIFT features of all images are extracted and loaded into a single FLANN kd-tree. In the test phase, SIFT features of the query image are extracted and each one is looked up in the FLANN tree. In fact, $K$ nearest neighbors of each feature are found and the votes of the corresponding image in the database is increased by one, i.e. voting scheme. Thereafter, database images are ranked based on the collected votes. After finding the $K$ best matches, geometric consistency and SIFT features angle difference checks are performed to prune the matched features between the query image and best matches. Next, the best match is selected among $K$ best matches. Following this stage, they perform pose estimation (fine localization). In fact, they use phone sensors including accelerometer and magnetometer in order to find rotation angles. Since, this side information is not available in our problem, we only use their best match selection as a benchmark coarse localizer and refer to this work as \emph{Liang's method} in the experimental results.

The second benchmark is the method proposed in \cite{Torii}, which proposes a feature-based method for fine localization.
The proposed method is composed of two stages. In the first stage, it finds the best pair of images in the database. The best pair is defined as the pair of images that the linear combination of their image descriptors estimates the query descriptor with the least error.
Next, it linearly combines (i.e. interpolates) the locations of best matches with an interpolation factor in order to estimate the query location. 
It suggests that the interpolation factor be an affine function of the visual similarity between database and query images. The visual similarity is computed using the well-known bag of feature representation of database and query images. The offset and slope parameters of the linear relation between the interpolation factor and the visual similarity is learned on an independent database in the training phase. For image similarity computation, the bag of features representation is used in \cite{Torii}. In this paper,  we use the modified VLAD's representation \cite{allVLAD}, which has demonstrated supreme performance. We refer to this benchmark as \emph{Torii's method}.

In  \cite{Hamed3}, we proposed an OCR-aided localization system called OCRAPOSE. The system utilizes OCR to read the existing characters in the query image and provides a rough location estimate using character location-tagged floor plan. Afterwards, it performs OCR-aided stereo feature matching between the query and database images containing the same characters. The stereo matching results are used to estimate a homography matrix. The homography is used to find the world coordinates of the query features. Next, a PnP problem is solved for the query feature points and their world coordinates to obtain a fine location estimate. The main difference between the OCRAPOSE and the OCRAPOSE II proposed here is that teh new system does not require any world coordinates measurement, image database collection or point feature extraction/matching. Hence, it is more practical in terms of alleviating the requirement for huge image database collection  and avoiding point feature processing complexity. Furthermore, two novel projective Geometry-based formulas are proposed in OCRAPOSE II for depth and angle-of-view estimation.

All of the mentioned benchmarks require training images. Therefore, we have to collect a number of  training images in each scenario. It should be noted that our method does not require any image databases. The only side information required is the 3D locations of characters centroid in indoor scenarios as well as the width of the OCR boxes defined later. This information are required to perform query fine localization on the floor plan.




\section{The proposed system (OCRAPOSE II)}
\label{sec:system}


Fig. \ref{fig:diagram} depicts the structure of the proposed system. 
As seen, the proposed system detects the horizontal vanishing point in order to estimate the user's AOV with respect to the location of the centroid of the characters. Furthermore, it detects and recognizes the characters in the query image, which along with the floor plan provides the required information for rough localization. Next, using the estimated AOV and the width of the OCR box in the image and real world, depth of the query is estimated. Finally, a fine estimate of the user's location is computed using the estimated depth and AOV.
In the sequel, we explain the role of system blocks in details.

\begin{figure*}[!t]
\centering
\includegraphics[width=0.7\textwidth,height=0.22\textheight]{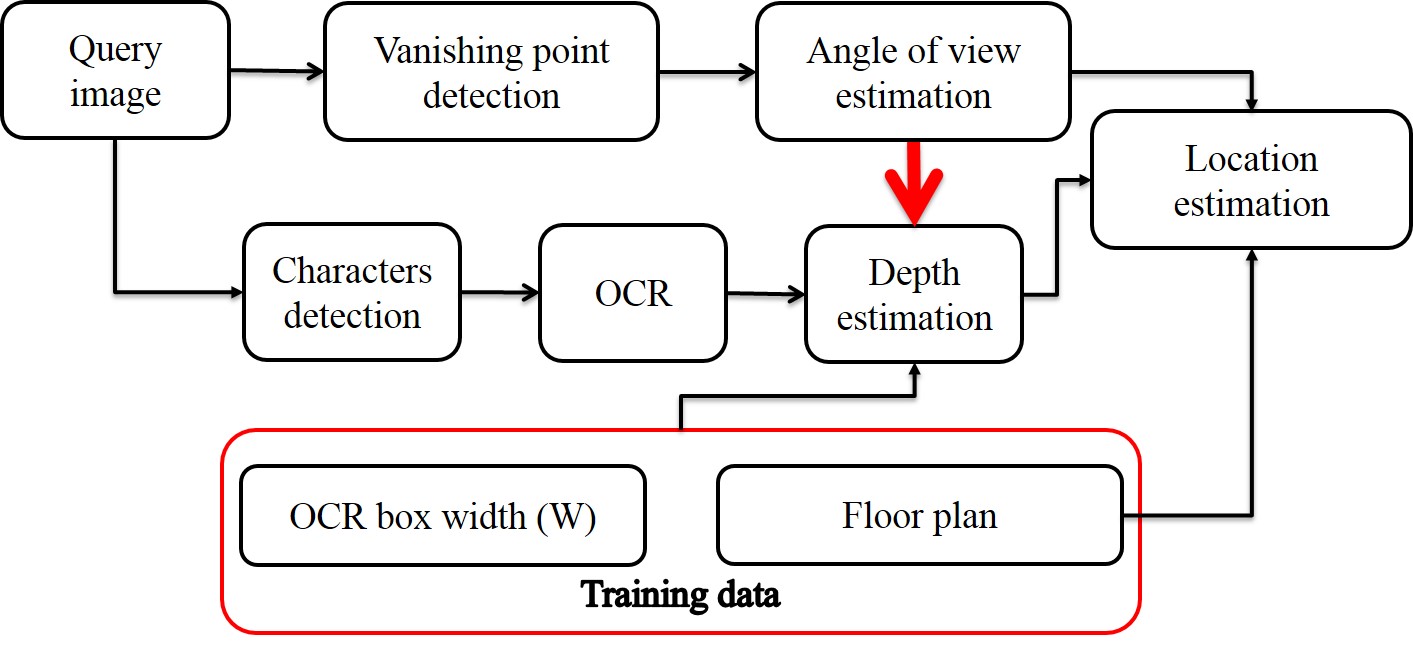}\\
\caption{Block diagram of the proposed localization system (OCRAPOSE II)}
\label{fig:diagram}
\end{figure*}


\subsection{Characters detection}
\label{sec:cdb}

As seen in Fig. \ref{fig:diagram}, the query image is input to the characters detection block.
The role of the characters detection block is to detect the regions of interest (RoIs) that contain the characters. False positive regions (non-character regions detected as character regions) are tolerable as long as  we are able to identify and remove them from the list of true positives.

Text detection in natural scene images is an extremely difficult problem \cite{text_TIP2, stanford} and might need parameters fine tuning in the scenario under investigation.
On the other hand, proposing a text detection algorithm is out of the scope of this paper. Hence, we use some of the state-of-the-art techniques to design our text detection block. It should be noted that utilizing a better text detection algorithm results in higher location recognition performance of the proposed method. Consequently, it results in larger performance gap between the proposed method and the feature-based ones.

In indoor scenarios, text is usually printed uniformly over a board (plate). It causes the text regions to appear with uniform colors (intensities) inside the query images. Such regions are good candidates to be detected as maximally stable extremal regions (MSER) \cite{MSER} as also suggested by \cite{stanford}. Hence, we extract MSER regions as potential characters.

In the next step, we perform Geometric filtering on the detected MSER regions. In works such as \cite{stanford}, this Geometric filtering is carried out after processing the MSER regions. We found this to be inefficient since it is computationally cheaper to remove the improbable regions first and avoid the required excess processing later. Therefore, we perform Geometric filtering first as follows and refine the MSER regions afterwards.
Define the set ($R$) of $n$ detected MSER regions as
\begin{equation}
R= \{ R_1, R_2, \cdots, R_n\}.
\end{equation}
Our Geometric filtering is composed of two steps. First, we remove large regions whose area (i.e. number of pixels) is more than $N$ times the median area of all regions.
Next, we remove regions whose orientation is more than $\epsilon$ degrees away from the vertical orientation. This is since each character is usually oriented vertically in a query camera image captured with zero pitch angle. 
  
Mathematically, a region $R_i$ passes through the Geometric filter if
  \begin{itemize}
\item $| R_i |< N \cdot \text{median} (|R|)$
\item $| \angle {R_i}  -90 \degree| < \epsilon$
\end{itemize}
   where $|R_i|=\text{\# pixels}(R_i)$, $|R|=\left(|R_1|, \cdots, |R_n|\right)^T$ and $\angle {R_i}=\text{Orientation}(R_i)$.  
In our experiments, we found appropriate values of $N$ and $\epsilon$ to be $10$ and $5$, respectively.


MSER region detector is known to be sensitive to image blur \cite{stanford}. Blurring causes the MSER detector to miss or detect distorted regions, which makes the OCR difficult. Hence, we enhance the MSER regions using \cite{stanford}. \cite{stanford} uses Canny edge detector to enhance the outline of detected extremal regions. In fact, it prunes the MSER regions along the detected gradient direction suggested by the Canny edge detector. 

We avoid further refinement or using other detection techniques proposed in the literature such as stroke width transform because of their complexity. The mentioned level of refinement turned out to be sufficient in our applications. We only perform simple preprocessing operations before inputting the image to the OCR engine as explained in Section \ref{sec:ocrblock}.


\subsection{OCR}
\label{sec:ocrblock}
The OCR block recognizes the existing characters in the detected regions. We perform some preprocessing prior to inputting the enhanced MSER regions to the OCR block. This preprocessing includes global binarization and removing relatively small/large regions. Next, we input the remaining regions to the OCR engine. 
In order to perform OCR, we use MATLAB OCR function, which is using Google Tesseract engine. The engine is based on convolutional neural network approach for character recognition. Tesseract is believed to be one of the most accurate OCR engines \cite{Tesseract}.


\begin{figure}
\center
\includegraphics[scale=0.5]{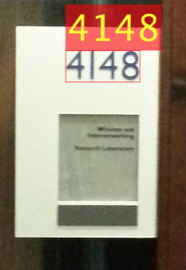}
\caption{Output of OCR}
\label{fig:OCR}
\end{figure}


\subsection{angle-of-view (AOV) estimation}
\label{sec:AOV}

We detect the line segments in the query image and utilize them in order to  estimate the horizontal vanishing point (VP). Next, we derive a novel formula for AOV estimation using the detected vanishing point.

\subsubsection{Vanishing point detection}
Vanishing points present in the image contain information about query camera rotation with respect to the seen objects \cite{VP2}. 
If the roll angle of the query camera is zero with respect to the horizon, horizontal (vertical) line segments will intersect in the horizontal (vertical) VP.  Moreover, the perpendicular (out of plane) VP can be obtained by the cross product of other two VPs.
Therefore, image line segments can help us find all vanishing points hence the complete query rotation matrix.

We perform line segments processing for VP detection. As we will explain in Section \ref{sec:AOV}, only horizontal VP is needed in our method. Hence, we detect it by solving a novel robust optimization problem partly inspired by \cite{berck13}.
In \cite{berck13}, line segments are detected using an edge-based method proposed in \cite{lsd}. We also use this method to extract line segments.
Afterwards, we estimate the horizontal vanishing point location in the image using detected line segments. In order to do this, we propose a novel robust optimization problem inspired by \cite{berck13} using RANSAC \cite{RANSAC}.

Assume the total number of  detected lines is $L$ and RANSAC selects $n$ line segments in each iteration. $n$ should be at least 2. In order to find the vanishing point, the proposed optimization problem finds the point in the image plane that has the minimum \emph{total weighted distances} from all $n$ lines considered in the current iteration (i.e. potential inliers). In fact,  in each iteration of RANSAC, we solve 
\begin{equation}
VP=\underset{\mathbf{x}}{\text{argmin}}\; {\sum_{i=1}^n {|L_i| \; \left(dist(\mathbf{x},L_i)\right)^2}  }
\label{eq:vp_opt}
\end{equation}
where $dist(\mathbf{x},L_i)$ represents the distance of image point $\mathbf{x}$ from the $i^{th}$ line (i.e. $L_i$). Furthermore, $|L_i|$ stands for the length of $L_i$ in terms of number of pixels.
In fact, we minimize the weighted sum of horizontal VP distances from the detected lines. Hence, the solution  is essentially the MAP estimate of the VP in which the priors are assigned proportional to the line lengths. Moreover, the distance noise is assumed Gaussian. Under these conditions, minimizing the proposed cost function maximizes the A-posteriori probability.
The proposed minimization problem has a closed form solution that follows.

\begin{proposition}
\label{prep:VP}
Assume the $i^{th}$ almost horizontal line ($L_i$) detected in the image is defined as follows
\begin{equation}
L_i : a_i x+b_i y +c=0 \quad \text{for} \; i=1,\cdots, n.
\end{equation}
The image coordinates of the horizontal VP can be estimated as
\begin{eqnarray}
\label{eq:VP_est}
VP_x=& \frac{BF-CE}{AE-BD} \nonumber\\
\quad\\
VP_y=& \frac{CD-AF}{AE-BD}
\end{eqnarray}
where
\begin{eqnarray}
 A =&\sum_{i=1}^n{\frac{1}{\gamma_i}|L_i| a_i^2} \nonumber\\
 B=& \sum_{i=1}^n{\frac{1}{\gamma_i}|L_i| a_i b_i} \nonumber\\
C=&\sum_{i=1}^n{\frac{1}{\gamma_i}|L_i| a_i c_i}\nonumber\\
D=&B \nonumber\\
E=&\sum_{i=1}^n{\frac{1}{\gamma_i}|L_i| b_i^2} \nonumber\\
F=&\sum_{i=1}^n{\frac{1}{\gamma_i}|L_i| b_i c_i}
\end{eqnarray}
and $\gamma_i=a_i^2+b_i^2$.
\end{proposition}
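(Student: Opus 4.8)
The plan is to recognize the objective in \eqref{eq:vp_opt} as a convex quadratic form in the image point $\mathbf{x}=(x,y)^T$ and to obtain its unique minimizer by solving the associated normal equations. First I would substitute the standard point-to-line distance $dist(\mathbf{x},L_i)=|a_ix+b_iy+c_i|/\sqrt{a_i^2+b_i^2}=|a_ix+b_iy+c_i|/\sqrt{\gamma_i}$ into the cost, so that we are minimizing
\begin{equation*}
J(x,y)=\sum_{i=1}^{n}\frac{|L_i|}{\gamma_i}\bigl(a_ix+b_iy+c_i\bigr)^2,
\end{equation*}
which is a nonnegative weighted sum of squares of affine functions of $(x,y)$, hence a convex quadratic.

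Next I would set the two partial derivatives $\partial J/\partial x$ and $\partial J/\partial y$ equal to zero, cancel the common factor $2$, and group the coefficients of $x$, of $y$, and the constant term. This produces the $2\times 2$ linear system
\begin{align*}
Ax+By+C&=0,\\
Dx+Ey+F&=0,
\end{align*}
with $A,B,C,D,E,F$ being exactly the weighted sums listed in the statement; in particular $D=B$ because the cross term $a_ib_i$ is symmetric in the two derivatives. Applying Cramer's rule to this system then gives $VP_x=(BF-CE)/(AE-BD)$ and $VP_y=(CD-AF)/(AE-BD)$, which is \eqref{eq:VP_est}.

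It then remains to confirm that this stationary point is the global minimum and that the formula is well defined. The Hessian of $J$ is $2\left(\begin{smallmatrix}A & B\\ B & E\end{smallmatrix}\right)$; since $A=\sum\tfrac{|L_i|}{\gamma_i}a_i^2\ge 0$ and $AE-B^2\ge 0$ by the Cauchy--Schwarz inequality applied to the vectors with entries $\sqrt{|L_i|/\gamma_i}\,a_i$ and $\sqrt{|L_i|/\gamma_i}\,b_i$, the Hessian is positive semidefinite, so $J$ is convex and any critical point is a global minimizer. Moreover the Hessian is strictly positive definite — so the minimizer is unique and $AE-BD\neq 0$ — unless all the line normals $(a_i,b_i)$ are collinear, i.e.\ all selected segments are parallel, which is precisely the degenerate configuration admitting no finite vanishing point and can be excluded in practice.

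I expect the only mildly delicate point to be this last well-posedness discussion: one has to justify that $AE-BD\neq 0$ for \eqref{eq:VP_est} to make sense, which reduces to ruling out an all-parallel line set selected by RANSAC. Everything else is routine differentiation followed by a $2\times 2$ linear solve.
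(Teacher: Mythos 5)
Your proposal is correct and follows essentially the same route as the paper: substitute the point-to-line distance into the weighted least-squares cost, set the two partial derivatives to zero, and solve the resulting $2\times 2$ linear system for $(VP_x,VP_y)$. The only difference is that you additionally verify convexity via the Hessian and discuss when $AE-BD\neq 0$, a well-posedness point the paper's proof leaves implicit.
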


\begin{proof}
The unconstrained optimization problem defined in (\ref{eq:vp_opt}) should be solved in order to find the VP. In order to perform this, we should set the derivative of the cost function
\begin{equation}
\label{eq:VP_cost}
Cost=\sum_{i=1}^n{|L_i| (dist(x,L_i))^2}
\end{equation}
with respect to the VP coordinates to zero. $dist(\mathbf{x},L_i)$ is the distance of the VP to the $i^{th}$ line
\begin{equation}
dist(\mathbf{x},L_i)=\frac{| a_i x+ b_i y + c|}{\sqrt{a_i^2 + b_i^2}}
\end{equation}
Setting the derivatives to zero as
\begin{equation}
\frac{ \partial{(Cost)}}{\partial{x}}=0, \; \;
\frac{ \partial{(Cost)}}{\partial{y}}=0
\end{equation}
results in (\ref{eq:VP_est}).
\end{proof}

Fig. \ref{fig:VP} shows the result of applying the proposed method on a sample query image taken inside a university building.
\begin{figure}[!t]
\centering
\includegraphics[width=0.5\textwidth,height=0.25\textheight
]{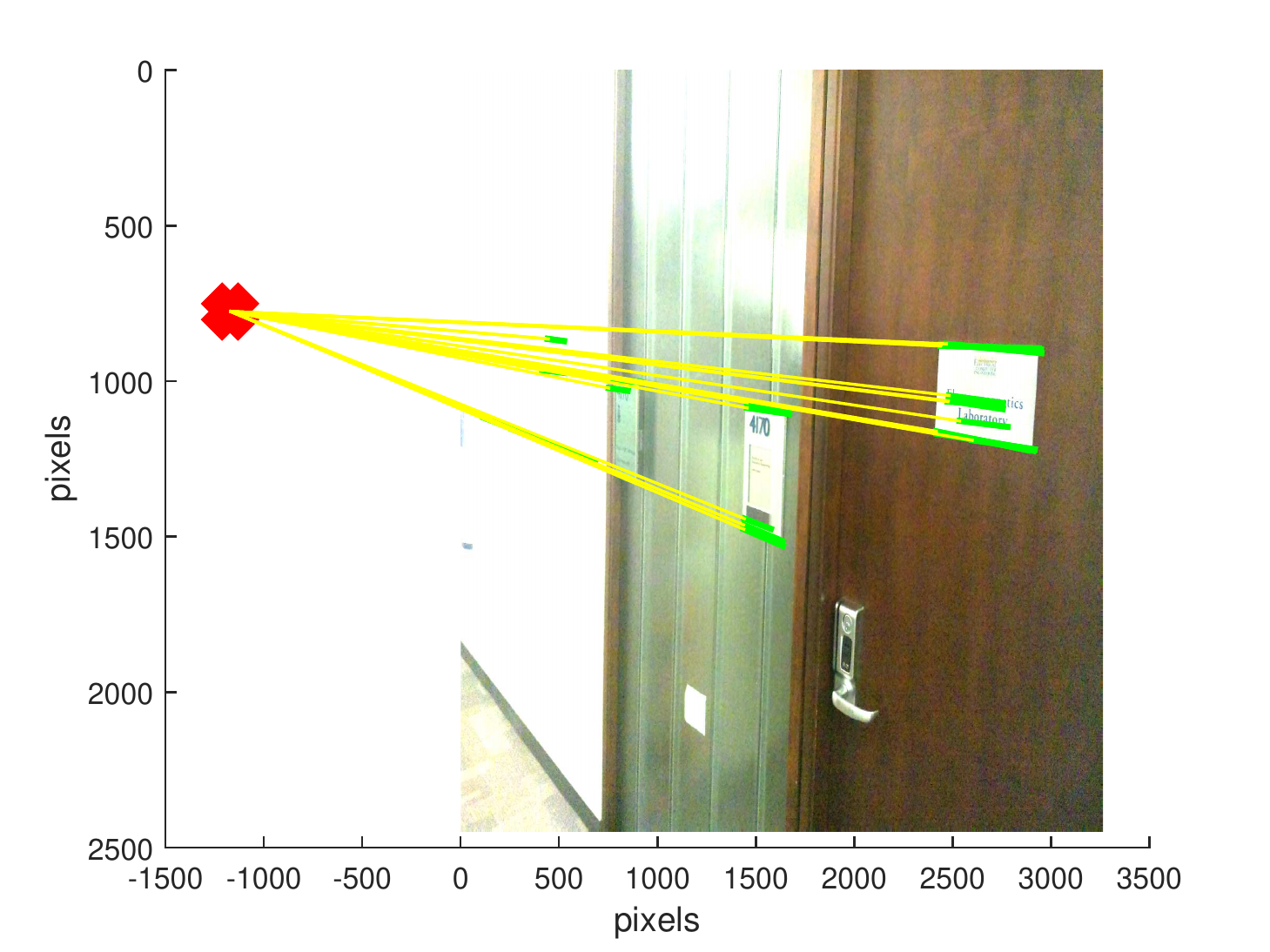}
\caption{Detected line segments (green) and horizontal VP (red cross), extrapolated semi-lines from the line segments are shown in yellow}
\label{fig:VP}
\end{figure}
As stated, in order to derive practical formulas for AOV and depth estimation, we make some assumptions about the query rotation angles with respect to the characters centroid (i.e. reference frame origin). First, we assume that the roll angle is zero. This assumption can be easily relaxed by some image preprocessing. As stated in \cite{minimal}, if the vertical vanishing point (i.e. $VVP=(x_{ver} \; y_{ver})^T$) is known, the roll angle can be estimated as

\begin{equation}
roll=\arctan(\frac{x_{ver}}{y_{ver}})
\end{equation}

Vertical vanishing point can also be estimated using a method similar to the one proposed for the horizontal VP.
Once the roll angle is estimated, the query image can be rotated accordingly in order to remove the non-zero roll effect. For human user localization, we need to perform this pre-processing to compensate for nonzero roll.
 
We assume the tilt angle is zero. This assumption is not exactly met in practice but the practical values are fairly close to zero. The difference comes from the height difference between the characters centroid and camera center. In Section \ref{sec:nonzerotilt}, we study the range of (non-zero) tilt angles in practice and demonstrate their negligible effect on the accuracy of AOV and depth formulas. 

The last assumption is about the pan angle. For indoor human user localization, we assume that the pan angle is equal to the AOV as depicted in Fig. \ref{fig:scenario}. This is a reasonable assumption since humans tend to orient their phone towards the point of interest. Technically speaking, they effectively align the normal vector of the image plane with the line connecting the target (characters centroid) to the camera center. The motivation is to place the location of the OCR detected characters in the center of attention, i.e. the middle of the image. It can be used as a test to see whether the pan angle conditions are met. 

Since tilt and roll are assumed to be zero, the only unknown angle is pan. As mentioned in \cite{Liang2013}, in indoor images, most of the visible objects are located on a single wall. In conclusion, lines seen (almost) horizontal/vertical in the query image are actually horizontal/vertical in the real world. 

Consider the scenario depicted in Fig. \ref{fig:scenario}. As seen, the pan angle is equal to the AOV. The location of horizontal VP is a function of the pan angle. Hence, knowing the horizontal VP, we can estimate the AOV. In the following section, we propose  a novel formula that estimates the AOV in the scenario depicted in Fig. \ref{fig:scenario}. 

\begin{figure}[t!]
\center
\includegraphics[height=0.3\textheight, width=.45\textwidth]{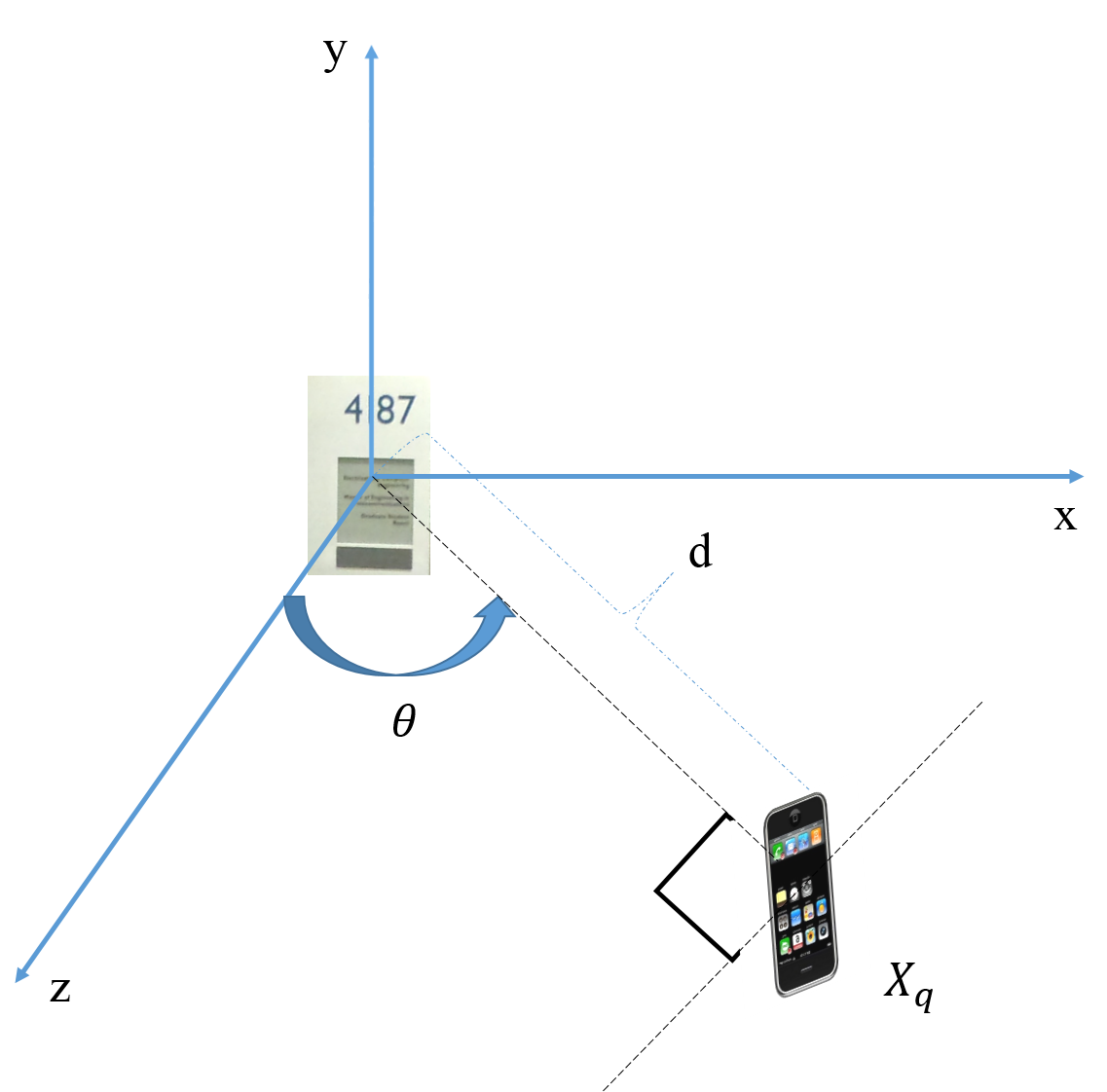}
\caption{The indoor scenario considered in this paper}
\label{fig:scenario}
\end{figure}

\subsubsection{Estimating AOV from the horizontal vanishing point}

In this section, we propose a novel formula for estimating the AOV in the practical scenario depicted in Fig. \ref{fig:scenario} using the estimated horizontal VP. We only use the OCR box corners in the derivation of the formula and it is not required to detect corners in the actual query phase in practice.
The OCR box is a fictitious box surrounding the characters as depicted in Fig. \ref{fig:OCRbox2}. 

The approximate formula that will be derived for AOV estimation is independent of the dimensions of the OCR box. It is only the depth estimation formula that requires the actual width of the box.
Hence, what we need to do is to only measure the width ($W$) of the OCR (characters) box in the environment. Since, this value is the same with high probability for the text/number plates in a single building, we only need to perform the measurement once. If the character boxes are different in width size in an environment, it is possible to tabulate the sizes and use them once the OCR engine has identified the characters. 

\begin{figure}
\center
\includegraphics[width=0.3\textwidth,height=0.2\textheight]{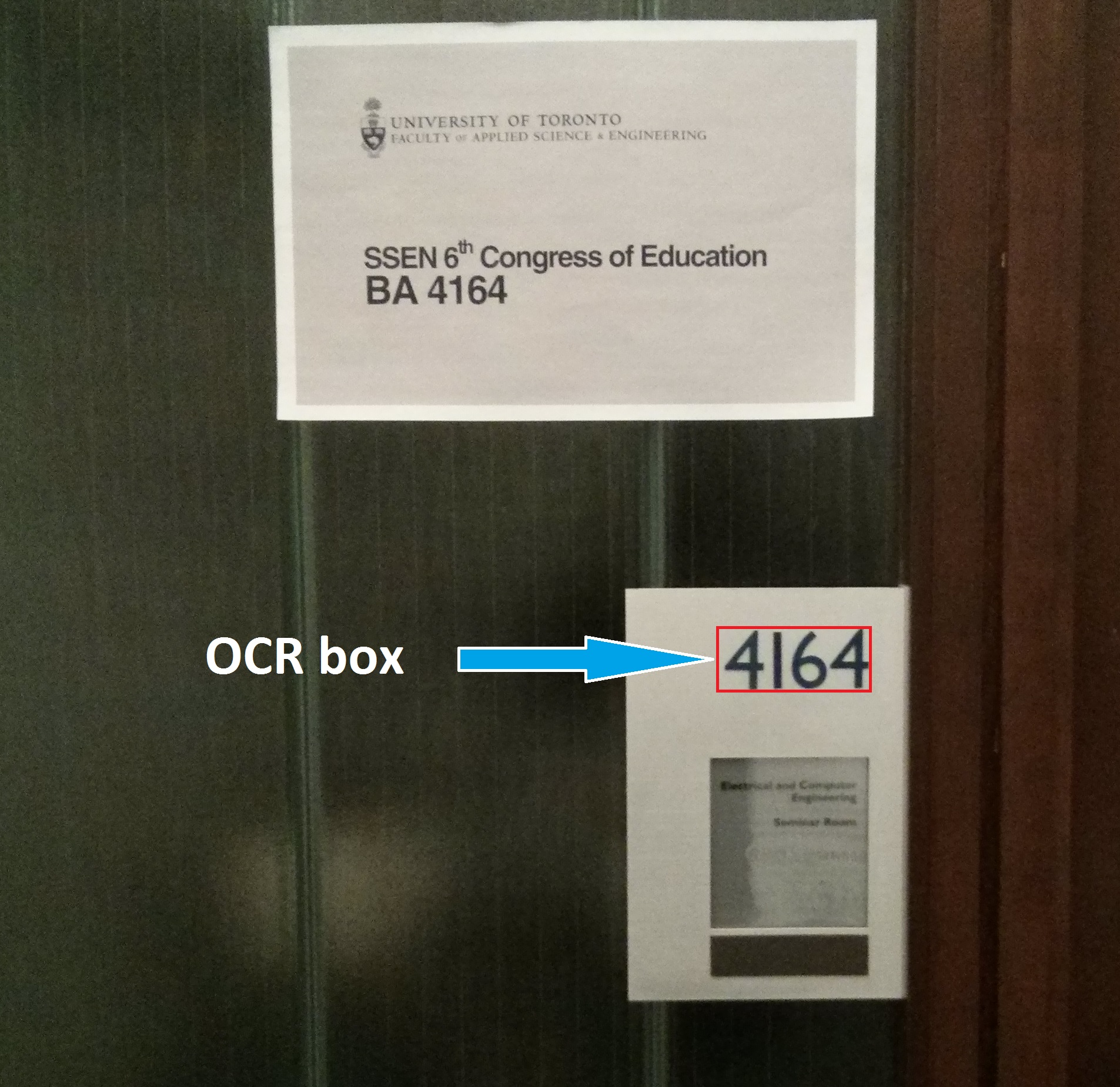}
\caption{A sample OCR box detected by the Tesseract engine}
\label{fig:OCRbox2}
\end{figure}

In order to derive the AOV formula, we consider an OCR box of arbitrary dimensions. We analyze the projection of this box in the query image. From the projected coordinates, a formula is derived for obtaining the horizontal VP. By manipulating  this formula, a closed-form formula is derived for AOV estimation in terms of horizontal VP location. Proposition \ref{prep:AOV} summarizes the results.

\begin{proposition}
\label{prep:AOV}
\textbf{\ignore{Agle of view estimation formula}} If the actual width of the OCR box, depth and AOV of the query are $W$, $d$ and $\theta$, respectively, the $x$ coordinate ($x_{hor}$) of the horizontal VP can be obtained as
\begin{equation}
x_{hor}=- \frac{\cos{\theta} (4d^2-W^2 \sin^2{\theta} + 2d W \sin{\theta} ) } { 4d^2\sin{\theta} - W^2 \sin^3{\theta}}
\label{eq:x_hor}
\end{equation}
\end{proposition}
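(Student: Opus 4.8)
The natural approach is a direct projective-geometry computation: build the camera from the stated assumptions, project the OCR box into the image, and read $x_{hor}$ off the projected coordinates.

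First I would fix a world frame with origin at the characters' centroid (the reference used for $d$ and $\theta$), one axis along the horizontal edges of the OCR box, one vertical, and one along the outward wall normal; the four box corners are then $(\pm W/2, \pm h/2, 0)$ for some height $h$ that I expect to cancel. The zero-roll, zero-tilt and ``pan equals AOV'' assumptions (Fig.~\ref{fig:scenario}) --- i.e.\ that the optical axis is aligned with the centroid-to-camera segment --- fully determine the pose in terms of $d$ and $\theta$: for a suitable choice of axes the camera centre is at $(d\sin\theta, 0, d\cos\theta)$ and the rotation is a pure pan through $\theta$.

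Next I would apply the normalized pinhole map $(x,y) = (X_c/Z_c,\, Y_c/Z_c)$ to the four corners, obtaining their image coordinates in closed form in terms of $W$, $d$, $\theta$ (and $h$). From these I would write the equations of the image lines used to locate the horizontal VP (the images of the box's horizontal edges, or whichever pair of lines the construction designates) and solve for their intersection --- equivalently, invoke Proposition~\ref{prep:VP} with $n = 2$. After the factor $h$ cancels, an elementary trigonometric reduction of the $x$-coordinate should give (\ref{eq:x_hor}); a useful sanity check is that letting $W \to 0$ (an infinitely thin box) recovers the ideal value $x_{hor} = -\cot\theta$.

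I expect the real difficulty to lie in getting the geometric model exactly right rather than in any deep step. One must be careful with sign and handedness conventions in the rotation, and --- more importantly --- with precisely how ``pan equals AOV'' pins down the pose and with exactly which image point is being called ``the horizontal VP,'' because (\ref{eq:x_hor}) genuinely involves $W$ and $d$. The classical vanishing point of the wall's horizontal direction is independent of the box, so the $W/d$ terms in (\ref{eq:x_hor}) are a finite-box-size correction that only appears if the projected corners and the intersecting lines are tracked faithfully; reproducing that correction is the delicate point. Once the projected coordinates are fixed, what remains is a routine but somewhat lengthy algebraic simplification, which is why I would organize it around Proposition~\ref{prep:VP} rather than redo the weighted intersection by hand.
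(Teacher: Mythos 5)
Your plan is, step for step, the paper's own proof: corners at $(\pm W/2,\pm H/2,0)$, camera centre at $(d\sin\theta,0,d\cos\theta)$, normalized camera $\mathbf{P}'=\mathbf{R}\,(\mathbf{I}\mid -\mathbf{X}_q)$ with a pure pan composed with the $\mathrm{diag}(1,-1,-1)$ flip, projection of the four corners, and intersection of the images of the top edge $\mathbf{x}_1\mathbf{x}_2$ and bottom edge $\mathbf{x}_4\mathbf{x}_3$; the paper, like you, leaves the final simplification unshown.

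The gap is the one your own sanity check brushes against and then explains away. The two edges you intersect are parallel world lines with direction $(1,0,0)$, so their images meet \emph{exactly} at the image of the ideal point $(1,0,0,0)^T$, namely $\mathbf{R}(1,0,0)^T\sim(-\cot\theta,\,0)$, for every $W$, $d$ and $H$; there is no ``finite-box-size correction'' to recover. Concretely, with $a=d+\frac{W}{2}\sin\theta$ and $b=d-\frac{W}{2}\sin\theta$ the projected corners are $\mathbf{x}_1=\bigl(-\frac{W\cos\theta}{2a},-\frac{H}{2a}\bigr)$ and $\mathbf{x}_2=\bigl(\frac{W\cos\theta}{2b},-\frac{H}{2b}\bigr)$ (with $\mathbf{x}_4,\mathbf{x}_3$ their mirror images in $y=0$), and the crossing of the line $\mathbf{x}_1\mathbf{x}_2$ with $y=0$ collapses, using $2d+W\sin\theta=2a$, to $x=-\cot\theta$ exactly. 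So the route you describe terminates at $x_{hor}=-\cot\theta$ and cannot produce the $W,d$-dependent right-hand side of (\ref{eq:x_hor}); the two expressions genuinely differ (for $\theta=30\degree$, $d=2$, $W=1$, equation (\ref{eq:x_hor}) gives $-1.95$ while $-\cot\theta=-1.73$; in fact (\ref{eq:x_hor}) equals $-\cot\theta-\frac{w}{2}$ with $w$ as in (\ref{eq:wexact})). Your proof as planned therefore does not establish the proposition as stated --- it establishes the cleaner, box-independent identity instead --- and to reach (\ref{eq:x_hor}) you would have to intersect something other than the images of the two parallel edges. Both expressions do reduce to $-1/\theta$ to first order in $\theta$, so the downstream corollary survives either way, but that does not close the gap for this proposition itself.
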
 

\begin{proof}
Without loss of generality, assume the OCR box is located at the origin of 3D reference frame as depicted in Fig. \ref{fig:scenario}. Furthermore, according to the scenario depicted in Fig. \ref{fig:scenario}, the user, i.e. the query camera, is located at 
\begin{equation}
\mathbf{X}_q=\begin{pmatrix}d\sin{\theta} \\\\ 0\\\\ d\cos{\theta} \\\\ 1 \end{pmatrix}
\end{equation}
where $d$ and $\theta$ are depth and AOV of the query with respect to the characters centroid. 
We use normalized camera matrix (i.e. $\mathbf{P}'=\mathbf{K}^{-1} \mathbf{P}$) in our analysis to make it independent from the camera calibration matrix.
Using our assumptions, the normalized camera matrix can be written as
\begin{equation}
\mathbf{P}'=\mathbf{R} \left( \mathbf{I} \; | -\mathbf{X}_q\right)
\end{equation}
where $\mathbf{R}$ and $\mathbf{I}$ are the query camera rotation matrix and the $3 \times 3$ identity matrix, respectively.
$\mathbf{R}$ can be written as
\begin{equation}
\mathbf{R}=\mathbf{R}_{roll} \; \mathbf{R}_{tilt} \; \mathbf{R}_{pan}
\end{equation}
where
\begin{eqnarray}
\mathbf{R}_{roll}=&\begin{pmatrix} 1 & 0 & 0\\ 0 & 1 & 0\\ 0 & 0 & 1   \end{pmatrix}\\
\mathbf{R}_{tilt}=&   \begin{pmatrix} 1 & 0 & 0\\ 0 & -1 & 0\\ 0 & 0 & -1  \end{pmatrix}\\
\mathbf{R}_{pan}=& \begin{pmatrix} \cos{\theta} & 0 & -\sin{\theta}\\ 0 & 1 & 0\\ \sin{\theta} & 0 & \cos{\theta}   \end{pmatrix}
\end{eqnarray}
As seen, the roll angle has been set to $180 \degree$ to account for pointing the camera towards the reference frame origin, which is an inwards direction.

As depicted in Fig. \ref{fig:OCRcloseshot}, $\mathbf{X}_1, \cdots, \mathbf{X}_4$ represent the 3D coordinates of the box corners. Define $\mathbf{X}_{box}=(\mathbf{X}_1, \cdots, \mathbf{X}_4)$. 
One can verify that $W=\| \mathbf{X}_2 - \mathbf{X}_1\|=\| \mathbf{X}_3-\mathbf{X}_4 \|$ and $H=\| \mathbf{X}_4 - \mathbf{X}_1\|=\| \mathbf{X}_3-\mathbf{X}_2 \|$.
$\mathbf{X}_{box}$ can be represented in terms of $W$ and $H$ as
\begin{equation}
\mathbf{X}_{box}=\begin{pmatrix}
-\frac{W}{2}  & \frac{H}{2}    &  0 &  1\\\\
\frac{W}{2}   &   \frac{H}{2}  &  0 &  1\\\\
\frac{W}{2}   & -\frac{H}{2}   &  0 &  1\\\\
-\frac{W}{2}  & -\frac{H}{2}   &  0 &  1
\end{pmatrix}
\end{equation}

\begin{figure}
\center
\includegraphics[scale=0.45]{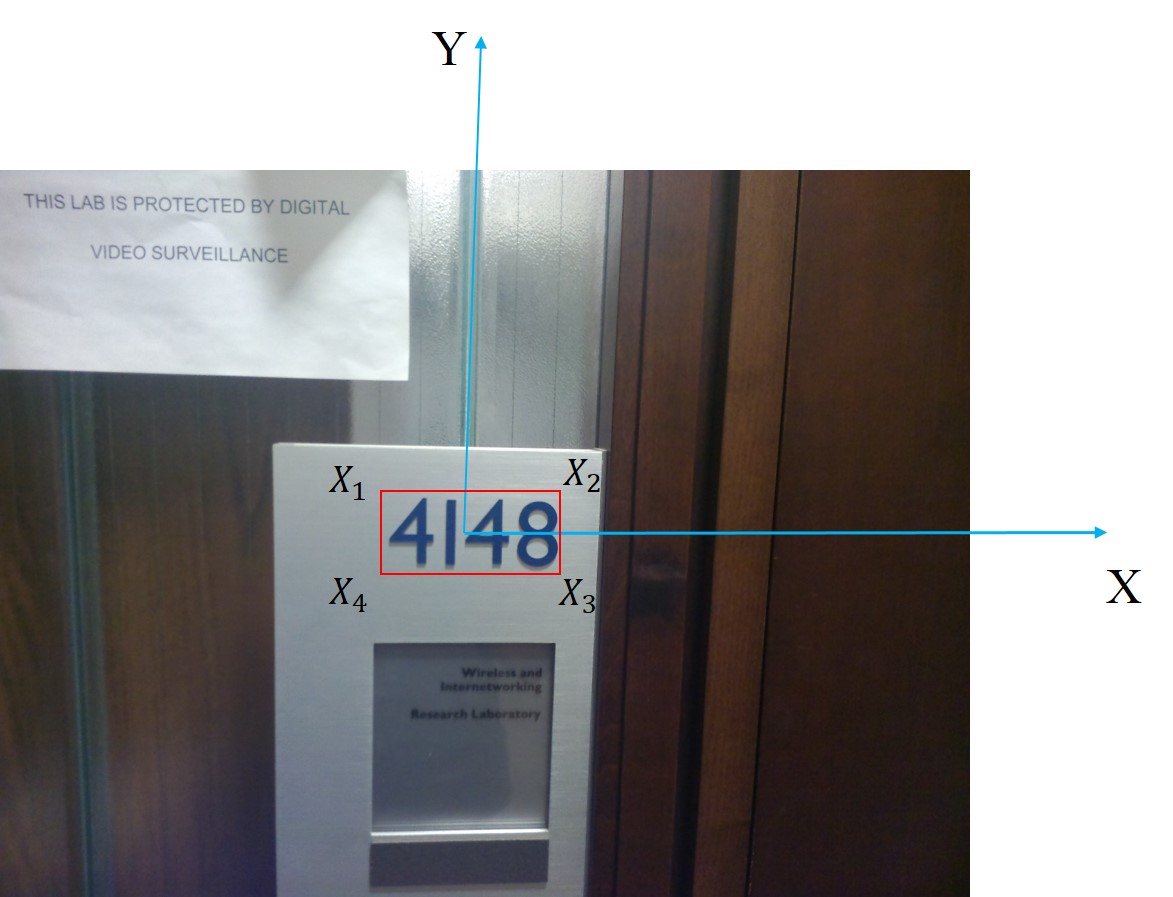}
\caption{Close shot of an OCR box}
\label{fig:OCRcloseshot}
\end{figure}

The image coordinates of corner points, i.e. $\mathbf{x}_1, \cdots, \mathbf{x}_4$, can be obtained as
\begin{equation}
\mathbf{x}_{box}=\mathbf{P}' \mathbf{X}_{box}
\label{eq:xbox}
\end{equation}

Equation (\ref{eq:xbox}) gives an expression for $\mathbf{x}_{box}$ as a function of  $W, H$, $\theta$ and $d$.  We find the horizontal VP and the width of OCR box using these coordinates.

Here, the goal is to find the horizontal VP in terms of $\theta$, W and H. This can be done by intersecting two `almost' horizontal lines connecting $\mathbf{x}_1$ to $\mathbf{x}_2$ and $\mathbf{x}_4$ to $\mathbf{x}_3$, respectively. 
Having done this, the horizontal VP is obtained as
\begin{equation}
VP_{hor}=\begin{pmatrix}
 x_{hor}\\\\
0                                                 
\end{pmatrix}
\end{equation}
where after simplifying results in (\ref{eq:x_hor}).
\end{proof}

As seen, $x_{hor}$ is not a function of H.  The proposed formula can be approximated to derive a simple formula for AOV as follows.
\begin{corollary}
The AOV can be estimated by approximating (\ref{eq:x_hor}) as
\begin{equation}
\theta \approx -\frac{1}{x_{hor}}
\label{eq:golden1}
\end{equation}
\end{corollary}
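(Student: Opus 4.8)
The plan is to obtain the approximate formula \eqref{eq:golden1} directly from the exact relation \eqref{eq:x_hor} by a small-angle / large-depth expansion, exploiting the fact that in the practical scenario of Fig.~\ref{fig:scenario} the depth $d$ is much larger than the width $W$ of the OCR box. Concretely, I would start from
\begin{equation}
x_{hor}=-\frac{\cos{\theta}\,(4d^2-W^2\sin^2{\theta}+2dW\sin{\theta})}{4d^2\sin{\theta}-W^2\sin^3{\theta}}
\end{equation}
and factor $4d^2$ out of both numerator and denominator, writing $\rho=W/(2d)$ so that the fraction becomes $-\cos{\theta}\,(1-\rho^2\sin^2{\theta}+2\rho\sin{\theta})\big/\big(\sin{\theta}\,(1-\rho^2\sin^2{\theta})\big)$. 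Letting $\rho\to0$ (i.e.\ $W\ll d$) collapses this to $x_{hor}\approx-\cos{\theta}/\sin{\theta}=-\cot{\theta}$.

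Next I would apply the small-AOV approximation: for the range of pan angles $\theta$ that arise when a user points a phone at a nearby room/gate plate, $\theta$ is small enough that $\cot{\theta}\approx 1/\theta$. Substituting this into $x_{hor}\approx-\cot{\theta}$ gives $x_{hor}\approx-1/\theta$, and solving for $\theta$ yields $\theta\approx-1/x_{hor}$, which is exactly \eqref{eq:golden1}. It is worth noting in the write-up that the two approximations are complementary: the $W/d\to0$ step removes the dependence on the box geometry, and the small-$\theta$ step turns the cotangent into a reciprocal; dropping either one leaves a formula that still depends only on $x_{hor}$ but is less convenient.

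The main obstacle — really the only subtle point — is justifying that both approximations are quantitatively harmless in the regime of interest, rather than merely formal. I would address this by estimating the relative error: the neglected terms in the $\rho$-expansion are $O(\rho\sin\theta)=O(W\sin\theta/2d)$, which is on the order of a few percent for typical indoor values (e.g.\ $W$ a few tens of centimetres and $d$ several metres), and the error in $\cot\theta\approx1/\theta$ is $O(\theta^2/3)$, again small for the modest pan angles encountered when the characters are kept near the image centre (the pan-equals-AOV assumption already enforced in Section~\ref{sec:AOV}). Since these error terms are benign and the experiments later confirm the accuracy empirically, a brief order-of-magnitude argument suffices and no delicate estimates are needed.
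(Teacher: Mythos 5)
Your proposal is correct and follows essentially the same route as the paper: approximate the exact expression \eqref{eq:x_hor} to obtain $x_{hor}\approx-1/\theta$ and invert. The paper states only the small-angle assumptions $\sin\theta\approx\theta$, $\cos\theta\approx1$, which by themselves give $x_{hor}\approx-\tfrac{1}{\theta}-\tfrac{W}{2d}$; your write-up is slightly more complete in that it makes explicit the additional step $W/(2d)\to0$ needed to kill the residual $W$-dependence (and hence to justify the paper's claim that the result is independent of $W$), together with an order-of-magnitude error estimate that the paper instead supplies numerically via Fig.~\ref{fig:error_xhor}.
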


 \begin{proof}
Assume
\begin{eqnarray}
& \sin{\theta} \approx \theta \\
& \cos{\theta} \approx 1
\end{eqnarray}
Applying these assumptions to (\ref{eq:x_hor}), we get 
 \begin{equation}
 x_{hor} \approx -\frac{1}{\theta}
 \label{eq:xhorapprox}
\end{equation}
Solving for $\theta$ results in (\ref{eq:golden1}).
\end{proof}

As seen, the AOV formula is independent of depth as desired. It is also independent of W. So, using this approximation, there is no need to measure any dimensions in the training phase for the AOV estimation .

Fig. \ref{fig:xhor} depicts the true and approximated values of $x_{hor}$. As seen, the approximation is close to the true value for almost the entire range of $[-45\degree, 45\degree]$.
\begin{figure}[!t]
\centering
\includegraphics[scale=0.6]{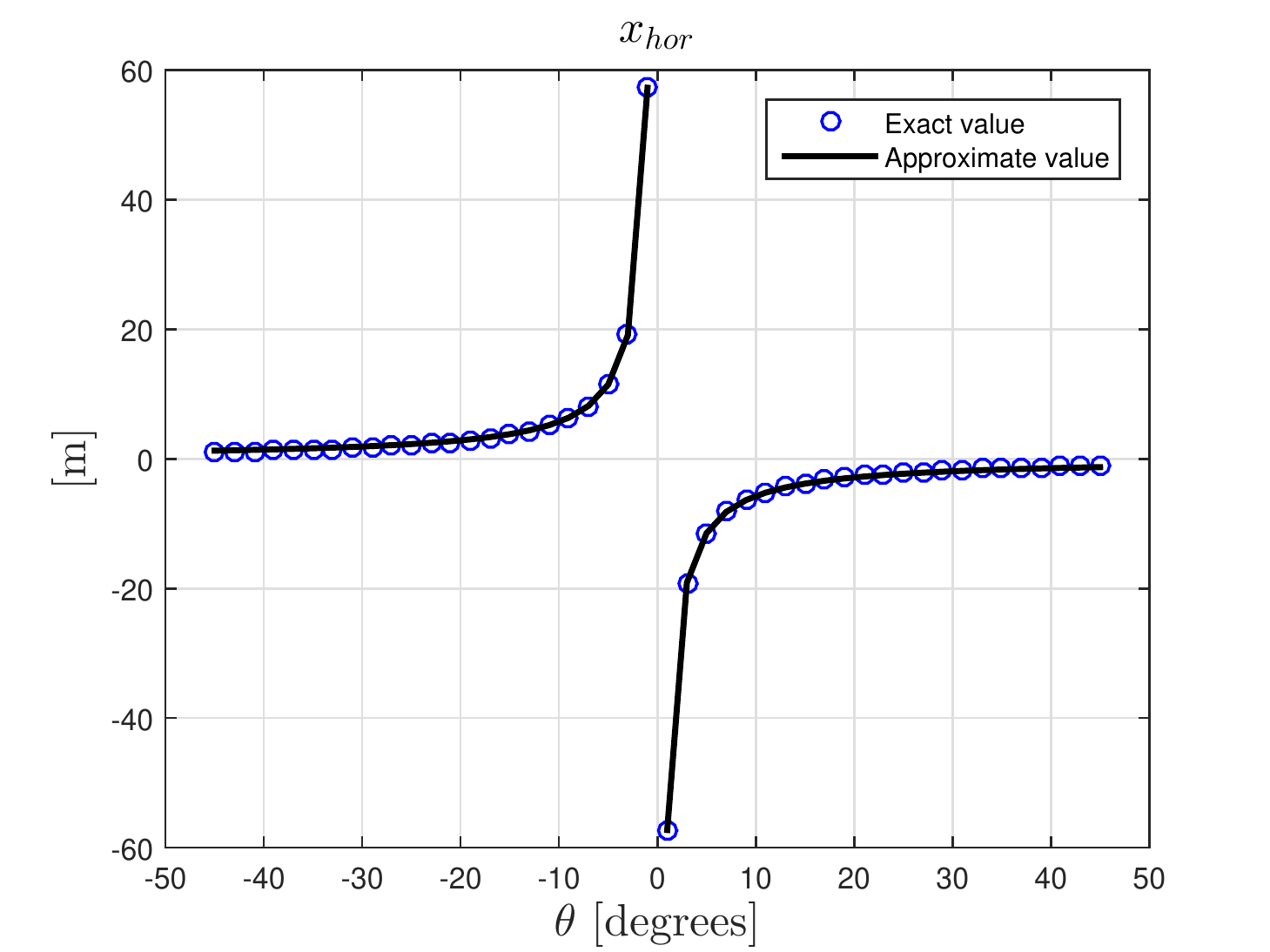}\\
\caption{Exact and approximate value of $x_{hor}$ [meters] w.r.t. $\theta$ [degrees]}
\label{fig:xhor}
\end{figure}
Fig \ref{fig:error_xhor} depicts the error in $\theta$  estimation for the mentioned range.
As seen, the closer the $\theta$ to the boundaries of the interval, the worse the approximation. The RMS value of the error for the entire range is $4.68 \degree$. 
\begin{figure}[!t]
\centering
\includegraphics[scale=0.6]{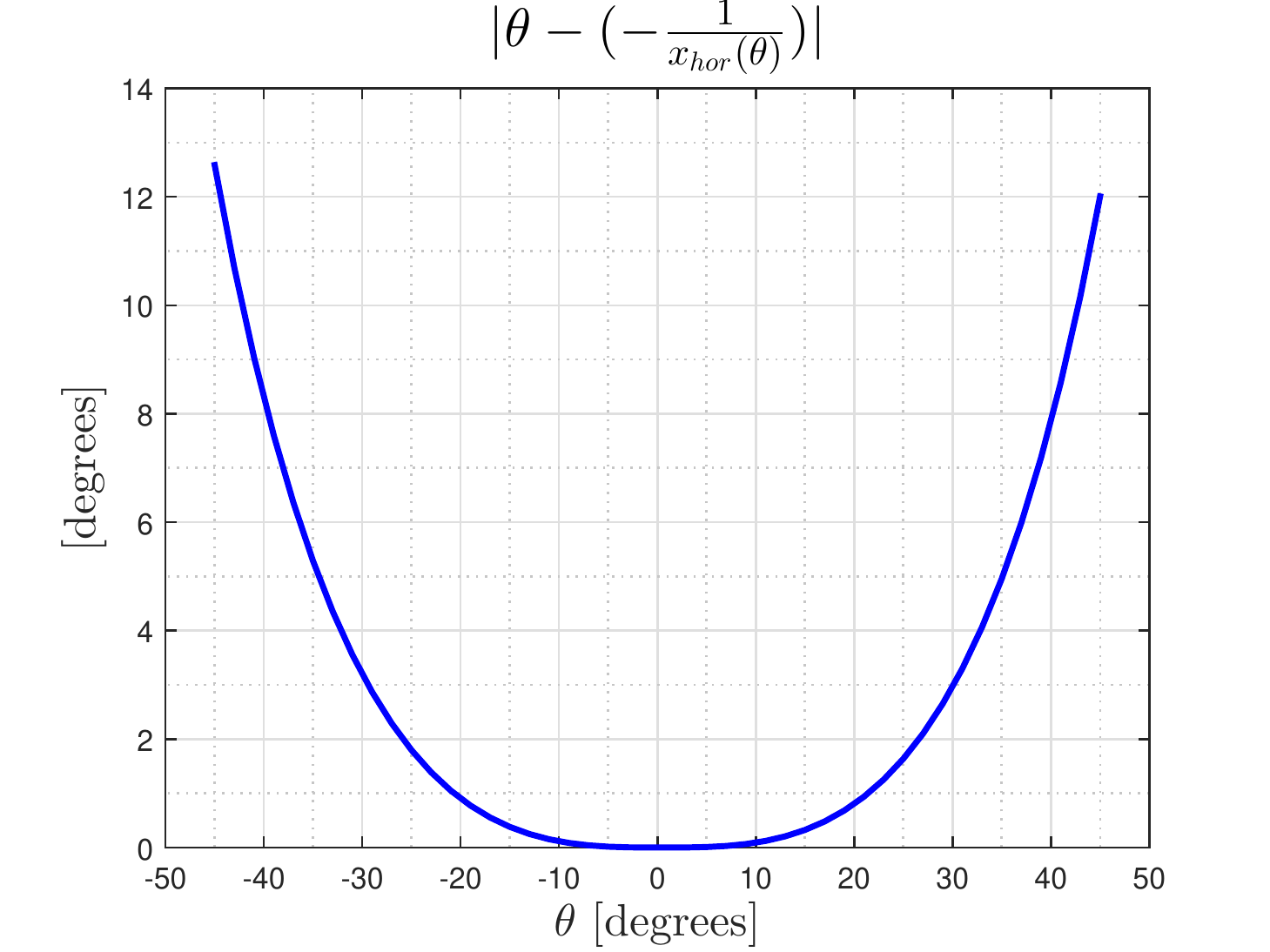}\\
\caption{Error in $\theta$ estimation [degrees] computed at different AOVs using a SAMSUNG Galaxy S5 cellphone camera calibration parameters}
\label{fig:error_xhor}
\end{figure}

\ignore{
One important measure of formulas robustness is sensitivity. The sensitivity of function $f$ with respect to variable $a$ is defined as
\begin{equation}
S_{fa}= \frac{\partial f (a)} {\partial a}
\end{equation}
It is an important measure since it shows how the function value changes if small error happens in the estimation of parameter as
\begin{equation}
\Delta f \approx S_{fa} \Delta a
\end{equation}
Hence, it can be interpreted as a measure of error analysis.}
 We investigate the sensitivity as a measure of robustness for the derived formulas. For, we take the derivative of the formula derived for the quantity of interest with respect to the parameters in the problem.
Fig. \ref{fig:sensitivity_xhor} depicts the sensitivity of the estimated $\theta$ over a range of $x_{hor}$ values for the SAMSUNG Galaxy S5 cellphone. For instance, for the $x_{hor}$ of 1000 pixels, which is corresponding to a $\theta$ of about $23 \degree$, the sensitivity is less than 0.2 degrees/pixel.
\begin{figure}[!t]
\centering
\includegraphics[scale=0.6]{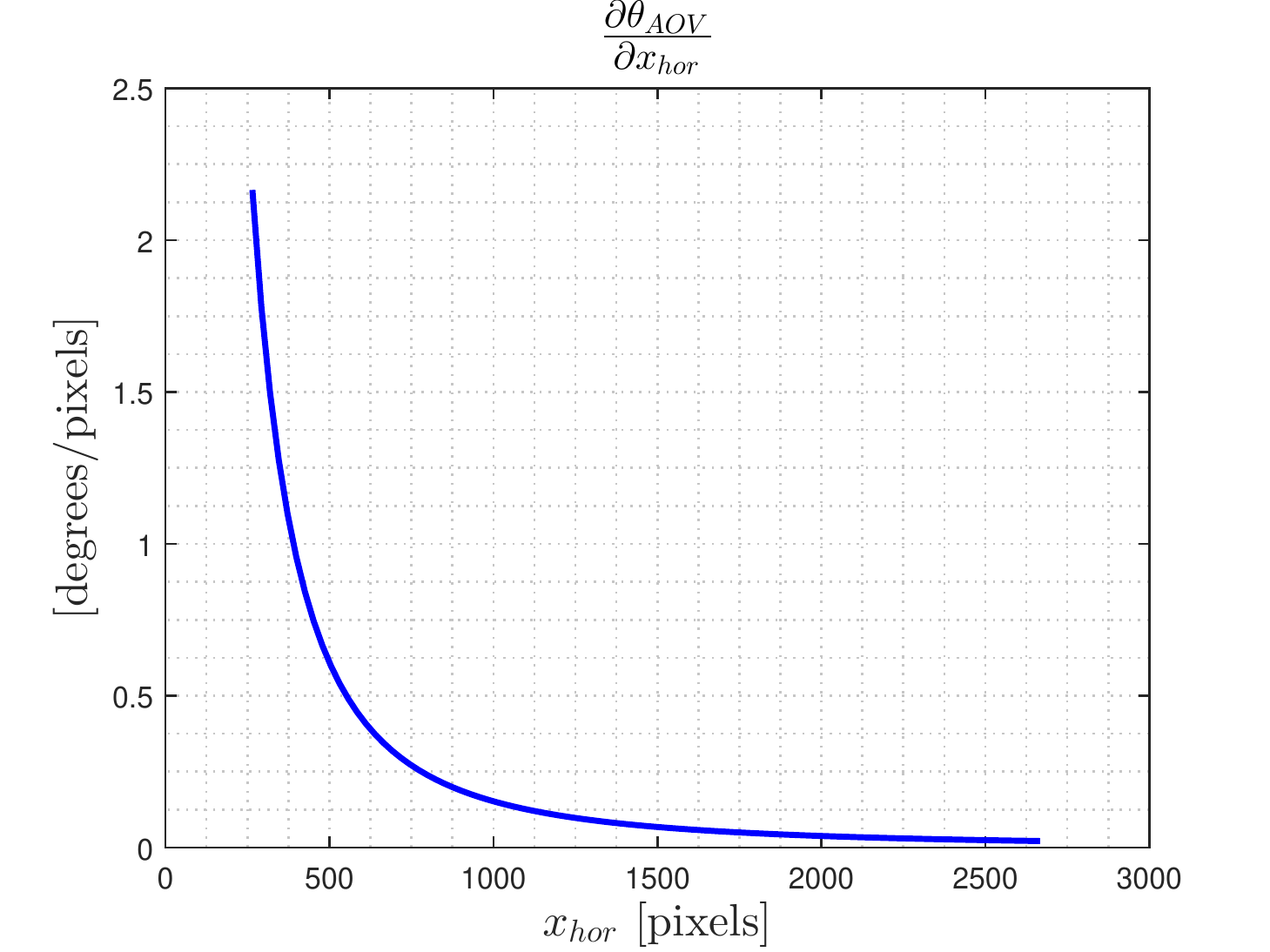}\\
\caption{Sensitivity of $\theta$ w.r.t. $x_{hor}$ [pixels] computed using a SAMSUNG Galaxy S5 cellphone camera calibration parameters}
\label{fig:sensitivity_xhor}
\end{figure}


Having recognized the existing characters, a rough estimate of the user's location can be obtained using the floor plan. Moreover, the user's AOV can be estimated from (\ref{eq:golden1}). The only missing information about the user's fine location is its distance (depth) to the characters centroid. Hence, we input the OCR output image to the depth estimation block. Notice that the existence of the OCR block is required since it provides a rough location estimate and side information required for depth estimation. 

\subsection{Depth estimation}
\label{sec:depth}
We propose a new practical formula for depth estimation, which utilizes the actual and image width of the OCR box.

\subsubsection{Depth estimation using width of OCR box}
\label{sec:depthw.r.t.ocrwidth}
It is obvious that as user gets further from the characters center, the width of OCR box in the query image gets smaller. Here, we formulate this relation.

\begin{proposition}
\label{prep:depth}
\textbf{\ignore{Depth estimation formula}} If the user's angle-of-view is $\theta$ and actual and image width of the OCR box are $W$ and $w$, respectively, user's depth can be calculated as
\begin{equation}
d=\frac{1}{2} \cos{\theta} \frac{W}{w} (1+\sqrt{1+w^2 \tan^2{\theta}}) \nonumber
\end{equation}
where $\theta$ is the query AOV.
\end{proposition}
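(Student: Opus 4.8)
The plan is to reuse the projective setup from the proof of Proposition~\ref{prep:AOV} and to express the \emph{image} width $w$ of the OCR box in closed form as a function of $W$, $d$ and $\theta$, then invert that relation to solve for $d$. Concretely, I would start from the projected corner coordinates $\mathbf{x}_{box}=\mathbf{P}'\mathbf{X}_{box}$ of (\ref{eq:xbox}), with $\mathbf{P}'$ and $\mathbf{X}_{box}$ exactly as in that proof. A short computation shows that the two left corners $\mathbf{x}_1,\mathbf{x}_4$ share one image abscissa and the two right corners $\mathbf{x}_2,\mathbf{x}_3$ share another, while the box height $H$ enters only the ordinates. Reading off the OCR box as an axis-aligned bounding box, its image width is therefore the difference of these two abscissae, which simplifies to
\begin{equation}
w=\frac{Wd\cos\theta}{d^{2}-\frac{W^{2}}{4}\sin^{2}\theta}.
\end{equation}

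Next I would clear denominators, obtaining the quadratic $w\,d^{2}-W\cos\theta\,d-\tfrac{wW^{2}}{4}\sin^{2}\theta=0$ in the unknown $d$. Its discriminant is $W^{2}\bigl(\cos^{2}\theta+w^{2}\sin^{2}\theta\bigr)$, so the quadratic formula gives $d=\tfrac{W}{2w}\bigl(\cos\theta\pm\sqrt{\cos^{2}\theta+w^{2}\sin^{2}\theta}\bigr)$. The ``$-$'' branch is non-positive for every admissible AOV $|\theta|<90\degree$, so depth positivity forces the ``$+$'' branch; finally, pulling $\cos\theta$ out of the radical via $\sqrt{\cos^{2}\theta+w^{2}\sin^{2}\theta}=\cos\theta\sqrt{1+w^{2}\tan^{2}\theta}$ reproduces the claimed expression $d=\tfrac12\cos\theta\,\tfrac{W}{w}\bigl(1+\sqrt{1+w^{2}\tan^{2}\theta}\bigr)$.

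I do not anticipate a conceptual obstacle: the work is essentially the bookkeeping already carried out for Proposition~\ref{prep:AOV}, reorganized. The two points that need care are (i) defining $w$ as the horizontal extent of the projected quadrilateral, so that only the corners' $x$-coordinates enter and the $H$-dependence cancels, and (ii) the sign selection in the quadratic, which must be justified by the physical constraint $d>0$ (equivalently $d>\tfrac{W}{2}\sin\theta$, the camera lying outside the plate). Everything else is routine algebraic simplification of the expressions produced in the proof of Proposition~\ref{prep:AOV}.
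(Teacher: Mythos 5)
Your proposal is correct and follows essentially the same route as the paper: project the corners with the parameterized normalized camera matrix, observe that the left and right edges stay vertical so $w=x_2-x_1=x_3-x_4$, arrive at the same closed form $w=\frac{4dW\cos\theta}{4d^2-W^2\sin^2\theta}$, and invert the resulting quadratic in $d$. The only (immaterial) difference is how the root is selected --- you argue the ``$-$'' branch is non-positive, whereas the paper dismisses it as the reflection through the camera center --- and both give the stated formula.
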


\begin{proof}
Consider the OCR box in Fig. \ref{fig:OCRbox2}. For AOV estimation, we studied top and bottom horizontal line segments of this box. Here, we calculate the box width ($w$) and show that it is a function of depth, AOV and $W$. 
Similar to the angle-of-view formula derivation, we only use the corners coordinates in order to derive the depth formula. In fact, it is not required to detect the corners in the actual query phase.

Consider matrix $\mathbf{x}_{box}$ that contains the image coordinates of OCR box corners. Due to projective distortion, the image of the OCR box is a quadrilateral in general. 
Considering the fact that vertical lines remain vertical if there is neither in-plane rotation (or compensated for) nor tilt angle, $w$ can be calculated as
\begin{equation}
w=x_2-x_1=x_3-x_4
\label{eq:w}
\end{equation}
Similar to the $\theta$ formula derivation, we know the complete query camera matrix in terms of $d$, $\theta$ and 3D coordinates of the OCR box corners. Hence, using the parameterized camera matrix to find the OCR box corner location in the image, that is $\mathbf{x}_i$s, and using (\ref{eq:w}) one can get
\begin{equation}
w=\frac{4dW\cos{\theta}}{4d^2 - W^2 \sin^2{\theta}}
\label{eq:wexact}
\end{equation}
Exact depth formula can be derived by solving for $d$ in (\ref{eq:wexact}). It will result in a second-order equation as
\begin{equation}
(4w) \; d^2-(4W\cos{\theta}) \; d-wW^2\sin^2{\theta}=0
\end{equation}
The solutions are
\begin{equation}
d_{1 \& 2}=\frac{1}{2} \cos{\theta} \; \frac{W}{w} (1 \pm \sqrt{1+ w^2\tan^2{\theta}})
\end{equation}
where $d_2$ is the width in the image plane reflection with respect to the camera center. Hence, only $d_1$ is acceptable and we have
\begin{equation}
d= d_1 = \frac{1}{2} \cos{\theta} \; \frac{W}{w} (1 + \sqrt{1+ w^2\tan^2{\theta}})
\label{eq:dexact}
\end{equation}
\end{proof}

The depth formula is in concord with our intuition that smaller $w$ corresponds to greater distance ($d$) from the characters.
Fig. \ref{fig:sensitivity_theta} shows the sensitivity of the depth estimate with respect to the estimated AOV, i.e. $\frac{\partial d}{\partial \theta}$. As seen, in a distance of $7m$ from the characters center and at an AOV of 60 degrees, which is one of the worst cases for localization accuracy, each 1 degree of error in AOV will result in about $0.1 m$ of error in depth estimate.

Fig. \ref{fig:sensitivity_w} depicts the sensitivity of depth estimate w.r.t. $w$ (in pixels) for the SAMSUNG Galaxy S5. For, we used the calibration matrix of the phone camera to obtain the sensitivity in meters/pixel. As seen, when $w \approx 50 \; \text{pixels}$, 1 pixel of error in $w$ results in about $10~ cm$ of error in depth estimation.
 \begin{figure}
 \center
 \includegraphics[scale=0.6]{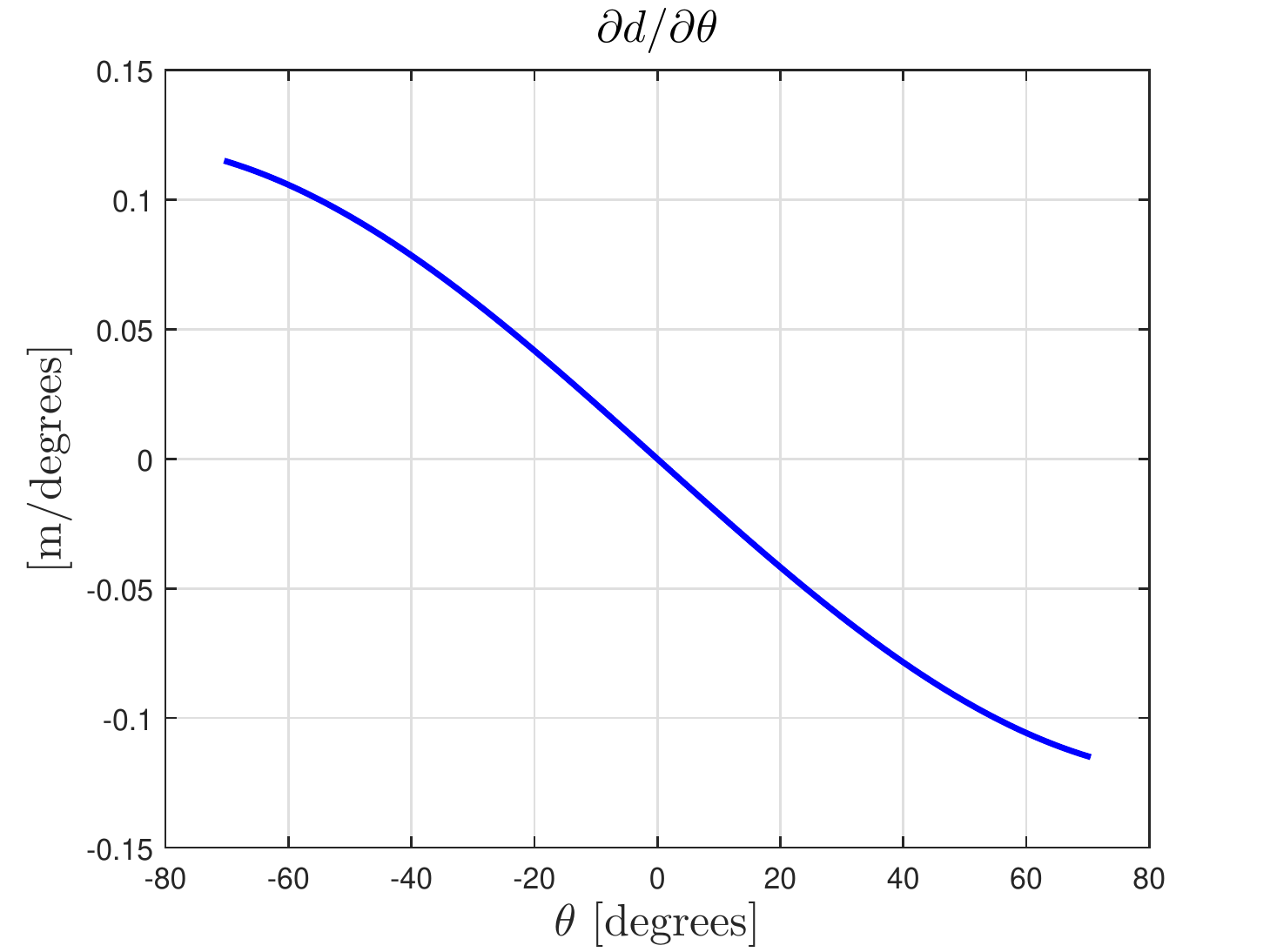}
 \caption{Depth sensitivity versus $\theta$ for $d=7~ m$, $W=0.1~m$ and $w=\frac{W}{d}$}
\label{fig:sensitivity_theta}
 \end{figure}
  \begin{figure}
 \center
 \includegraphics[scale=0.6]{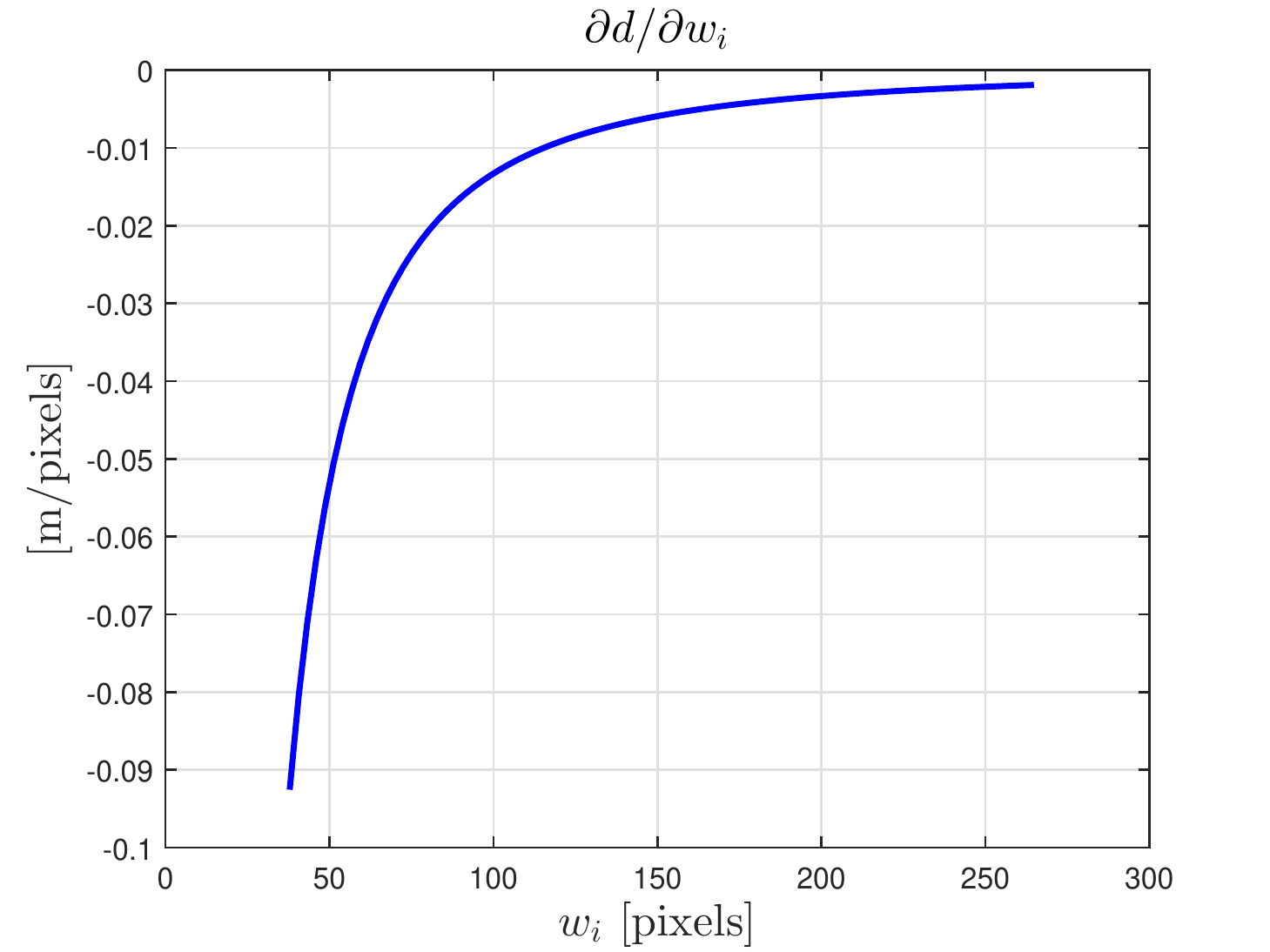}
\caption{Depth sensitivity versus the image width of OCR box for $d=7~m$, $W=0.1~m$ and $w=\frac{W}{d}$}
 \label{fig:sensitivity_w}
 \end{figure}
Finally, the sensitivity of the depth formula is linear with respect to the measured $W$.

\subsection{Location estimation}
\label{sec:locest}
Once depth and AOV are computed, one can obtain the $X$ and $Z$ coordinates of the user location for the scenario depicted in Fig. \ref{fig:scenario} as
\begin{eqnarray}
& X=d \sin{\theta} \nonumber\\
& Z=d \cos{\theta} 
\end{eqnarray}
The $y$ coordinate is not of interest for 2D localization in indoor applications, whereas it is essential for other applications such as augmented reality (AR).


\section{Non-zero tilt angle effect on AOV and depth estimation error}
\label{sec:nonzerotilt}

In the derivation of $x_{hor}$ and  $w$ formulas, we assumed tilt angle ($\phi$) is zero. In order to study the effect of non-zero tilt angle in practice, we re-derive the formulas for non-zero tilt angle as

\begin{equation}
x_{hor} =  \frac{\cos{\theta}}{2 d \cos{\phi} \sin{\theta}} (\frac{s_1+s_2}{s_3+s_4} ) 
\label{eq:new_xhor}
\end{equation}
and
\begin{equation}
w =  \begin{cases} x_2-x_1, & \phi >0 \\ x_3-x_4, & \phi<0 \end{cases}  =| \frac{s_5}{s_6+s_7} |
\end{equation}
where
\begin{eqnarray}
s_1=& 8 d^3 \cos^2{\phi}+ 2 d H^2 \sin^2{\phi}- 2 d W^2 \cos^2{\phi} \sin^2{\theta} \nonumber\\
s_2=& 4 d^2 H \sin{2 \phi} + 4 d^2 W \cos^2{\phi} \sin{\theta} - H^2 W \sin{\theta} \sin^2{\phi} \nonumber\\
s_3=& 4 d^2 \cos^2{\phi} + H^2 \sin^2{\phi} \nonumber\\
s_4=& - W^2 \cos^2{\phi} \sin^2{\theta} +2 d H \sin{2 \phi} \nonumber\\
s_5=& 2 W \cos{\theta} (2 d \cos{\phi} - H \sin{\phi}) \nonumber\\
s_6=& 4 d^2 \cos^2{\phi} + H^2 \sin^2{\phi} \nonumber\\
s_7= & - W^2 \cos^2{\phi} \sin^2{\theta} - 2 d H \sin{2 \phi} 
\end{eqnarray}
It should be considered in the derivation of $w$ formula that the rectangular box provided by the OCR engine always frames the entire area of the characters. Hence, the box width is equal to the maximum horizontal distance among $x_i$s based on the sign of $\phi$ as stated in the formula.
 
Equation (\ref{eq:new_xhor}) is not obviously the same as (\ref{eq:x_hor}) hence cannot be approximated in the same way.
Furthermore, it should be noted that the $y_{hor}$ is not zero in the case of non-zero tilt angle. But, if we still utilize  $x_{hor}$ to estimate $\theta$ using (\ref{eq:golden1}), there would be some error. Fig. \ref{fig:theta_tilt_error} depicts the RMS value of the error over a range of $\theta$ w.r.t. different values of tilt angle ($\phi$). As seen, for tilt angles in the range of $[-30,30]$, the RMS error in the AOV estimate is decreasing as the absolute value of the tilt increases. This shows that the error induced by the non-zero tilt angle is canceling the approximation error existing in (\ref{eq:golden1}).
The reason is that the absolute value of estimated AOV, i.e. the inverse of $x_{hor}$, is always greater than the absolute value of the actual AOV, i.e. $|\theta_{actual}| < |\theta_{estimated}|=\frac{1}{|x_{hor}|}$. In addition, increasing the absolute value of $\phi$ from $0$ to about $35 \degree$ (i.e. $\phi \in [-30, 30]$) decreases the $|x_{hor}|$ hence increases  $|\theta_{estimated}|$. In essence

\begin{equation}
|\phi| \uparrow \; \Rightarrow \; |x_{hor}| \uparrow \; \Rightarrow \; |\theta_{estimated}| \rightarrow |\theta_{actual}|
\end{equation}
In other words, for a relatively large range of tilt angles, the estimated AOV is still close to the actual value.

\begin{figure}
\center
\includegraphics[scale=0.6]{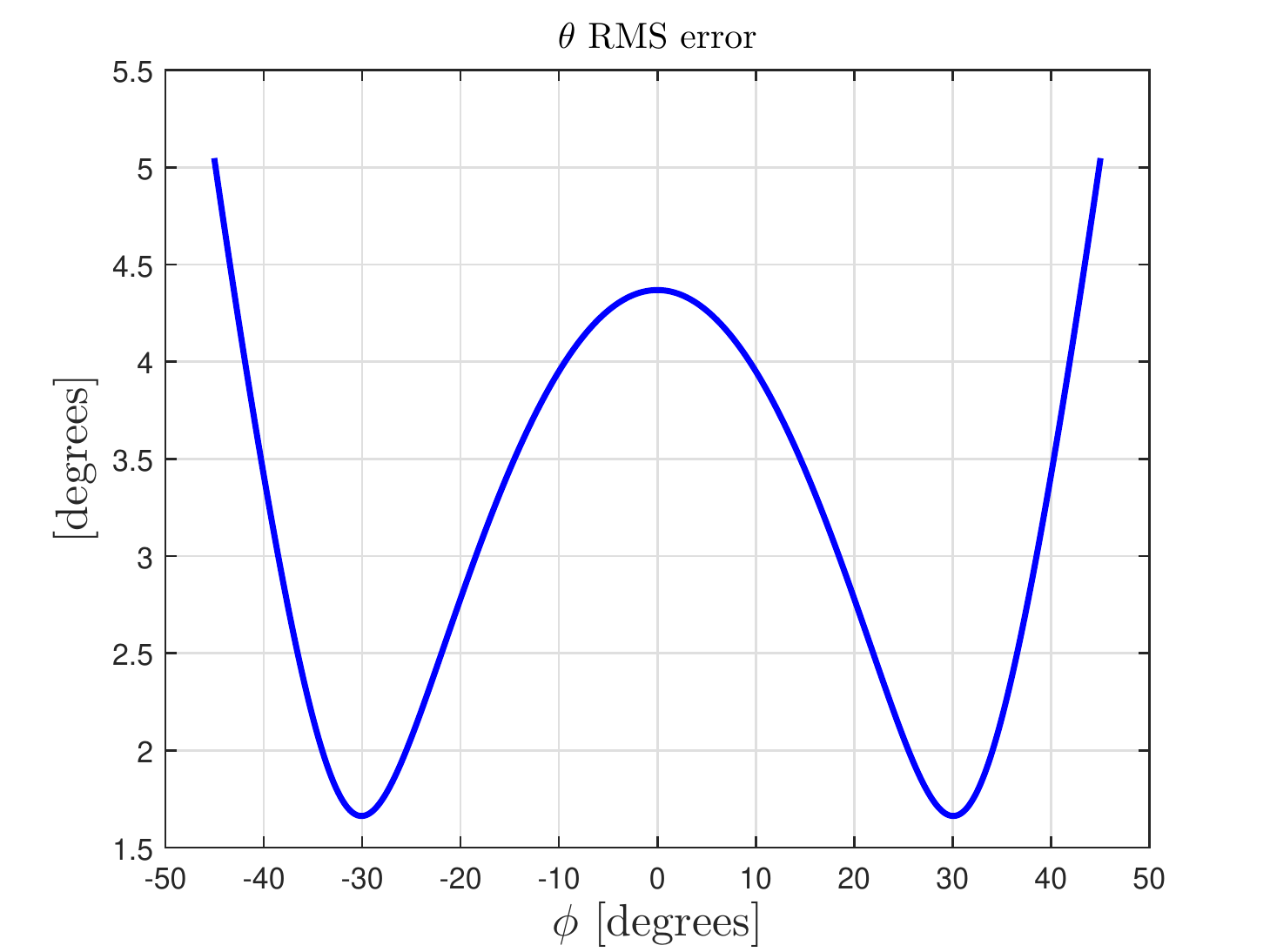}
\caption{RMS value of error in $\theta$ [degrees] when $\theta \in [-45,45]$ versus different values of $\phi$}
\label{fig:theta_tilt_error}
\end{figure}

Fig. \ref{fig:depth_tilt_error} depicts the normalized RMS error in the depth estimate for different values of $\phi$. Normalized error is defined as the error in the depth estimate divided by the actual depth.
As seen, when $\phi \in [-20, 20]$, the RMS value of the normalized error is less than 6\%. 

\begin{figure}
\center
\includegraphics[scale=0.6]{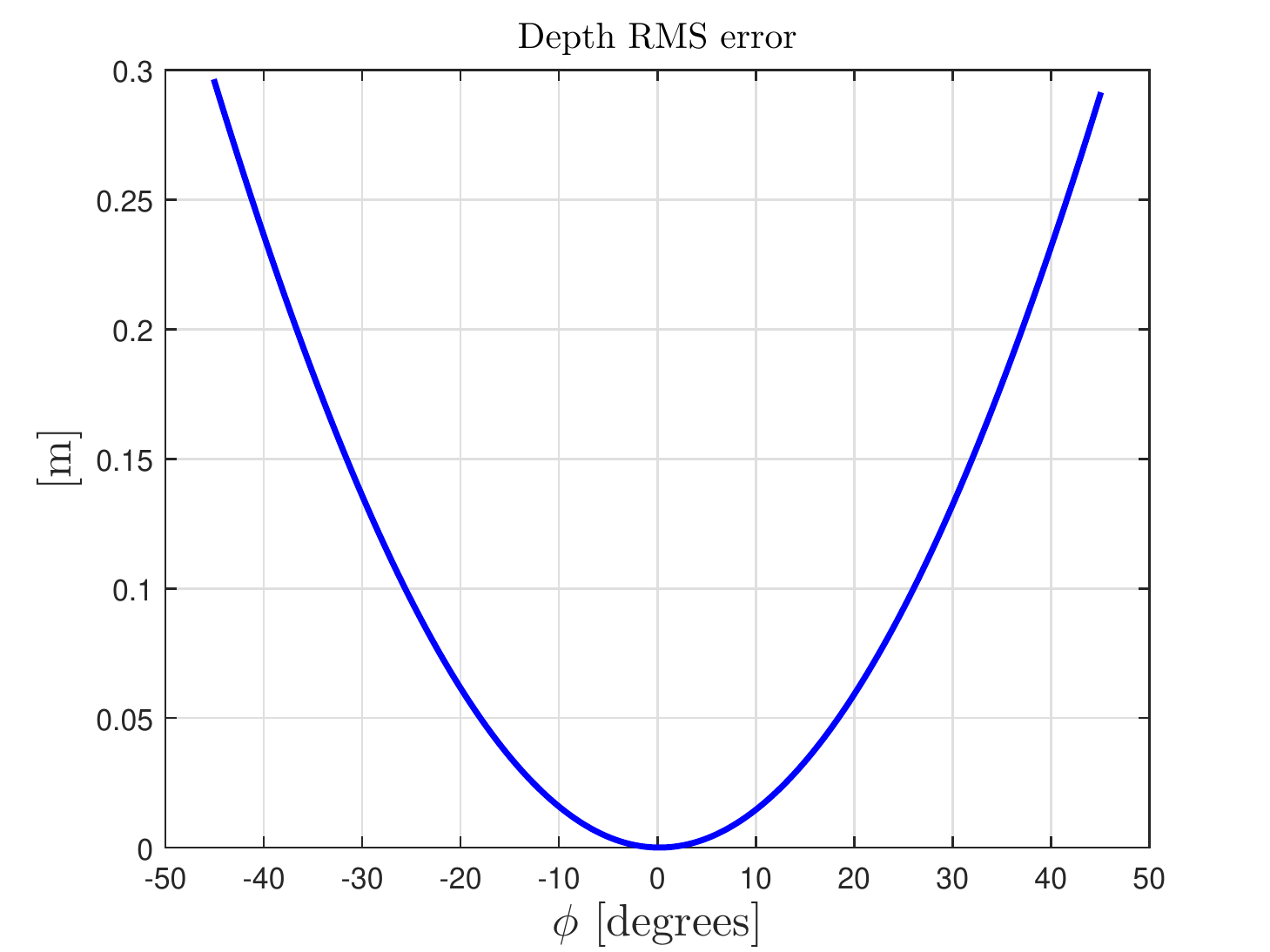}
\caption{RMS value of normalized error in depth [meters] when $\theta \in [-45,45]$ versus different values of $\phi$}
\label{fig:depth_tilt_error}
\end{figure}

In conclusion, if $\phi \in [-20, 20]$, both formulas for AOV and depth estimation are still effective and can be used without any change. 
Furthermore, the mentioned $ [-20, 20]$ is a quite large interval for $\phi$ in practical scenarios. For example, in the university scenario of our experiments, the centroid of the characters region is located at the height of 1.5 m. 
As stated, human users naturally point their phones towards the centroid of the characters plate. Furthermore, they usually hold the phones at a height  approximately equal to the eye level. Hence, non-zero tilt comes from the height difference between the phone and the characters centroid as stated before. In conclusion, if the user is horizontally $1 m$ away from the characters centroid and the phone is held at a height of $113 cm$ to $186 cm$, the tilt varies in the mentioned interval. As user gets further, the tilt range corresponds to even greater range of heights. Hence, the mentioned range is quite large and contains almost all possible people eye heights in indoor scenarios.
We can have a  similar discussion for the airport scenario. In the worst case scenario, assume that the gate centroid is located at a height of $6 m$ and the query camera at $1 m$. In this case, if the user horizontal distance from the centroid is greater than about $14 m$, the tilt angle will be less than $20 \degree$.

\section{Experimental results}
\label{sec:exp}


The localization performance of the proposed system is compared with the state-of-the-art works of Liang's \cite{Liang2013} and Torii's \cite{Torii}.
We compare the methods in terms of the recognition error rate and location estimation error. In Liang's method, the recognition error percentage is defined as the percentage of the  images wrong detected as the best match. In our location recognition problem, wrong best match is a database image that does not contain the same characters as the query.
In Torii's method, the recognition rate is meaningless since there is no image retrieval phase and the best image pair is selected from the database images.
In our method, i.e. OCRAPOSE II,  the wrong recognition corresponds to recognizing one or more characters wrong. In the conventional buildings, wrong recognition of a single character can lead to a false location right besides the query, meters away or even in other floors depending on the location of the mis-recognized character. 
Furthermore, estimation error for all methods is defined as the fine localization error among the truly recognized locations. 

The camera calibration matrices of the phones cameras are assumed known in all scenarios.
Feature extraction and processing needed for the benchmarks is performed using the codes available at \cite{vlfeat}.
In Torii's method, we use the modified VLAD decriptors \cite{allVLAD} for image description to obtain high recognition performance. The modified VLAD descriptor \cite{VLAD} is an improved version of the descriptor suggested in \cite{VLAD} that has demonstrated supreme performance compared to the tf-idf methods such as the one used in \cite{Torii}. 

We studied two university building scenarios and one parking area as the representatives of indoor scenarios. The building scenarios contain a large number of different numbers located at different locations, which makes them appropriate candidates for location recognition error evaluation. On the other hand, the parking scenario only contains one word that can be seen from far distances and different angle-of-views. Hence, it is a suitable scenario for localization (estimation) error evaluation. In addition, two commercial mobile phones, i.e. Google Nexus 4 and Samsung Galaxy S5 were used in experiments to provide device diversity conditions. Furthermore, for both university building scenarios, the $W$ measurement was done only once since all set of characters were of the same width.

\subsection{Scenario I - University building}

In this scenario, there exist light characters printed on dark plates. Sample query images are shown in Fig. \ref{fig:SF_samples}. Furthermore, the floor plan is depicted in Fig. \ref{fig:SF4_plan} and the trace of locations can be seen as a red dotted line.
For the training and query phases of the benchmarks, images are captured at regular locations. That is, for each room number in the Fig. \ref{fig:SF4_plan}, we capture three images, i.e. left, middle and right, with the following (depth, AOV) pair information
\begin{equation}
(1.8 \; m, -33 \degree ), (1.5 \; m, 0 \degree ), (1.8 \; m, +33 \degree ) \nonumber
\end{equation}
This was done at 17 locations and a total of 51 images were taken in this scenario.
We consider odd-index images as queries and even ones as database images. Furthermore, a SAMSUNG Galaxy S5 cellphone is used for image capturing in this scenario.
\begin{figure}[!t]
\centering
\includegraphics[width=.5\textwidth,height=0.3\textheight]{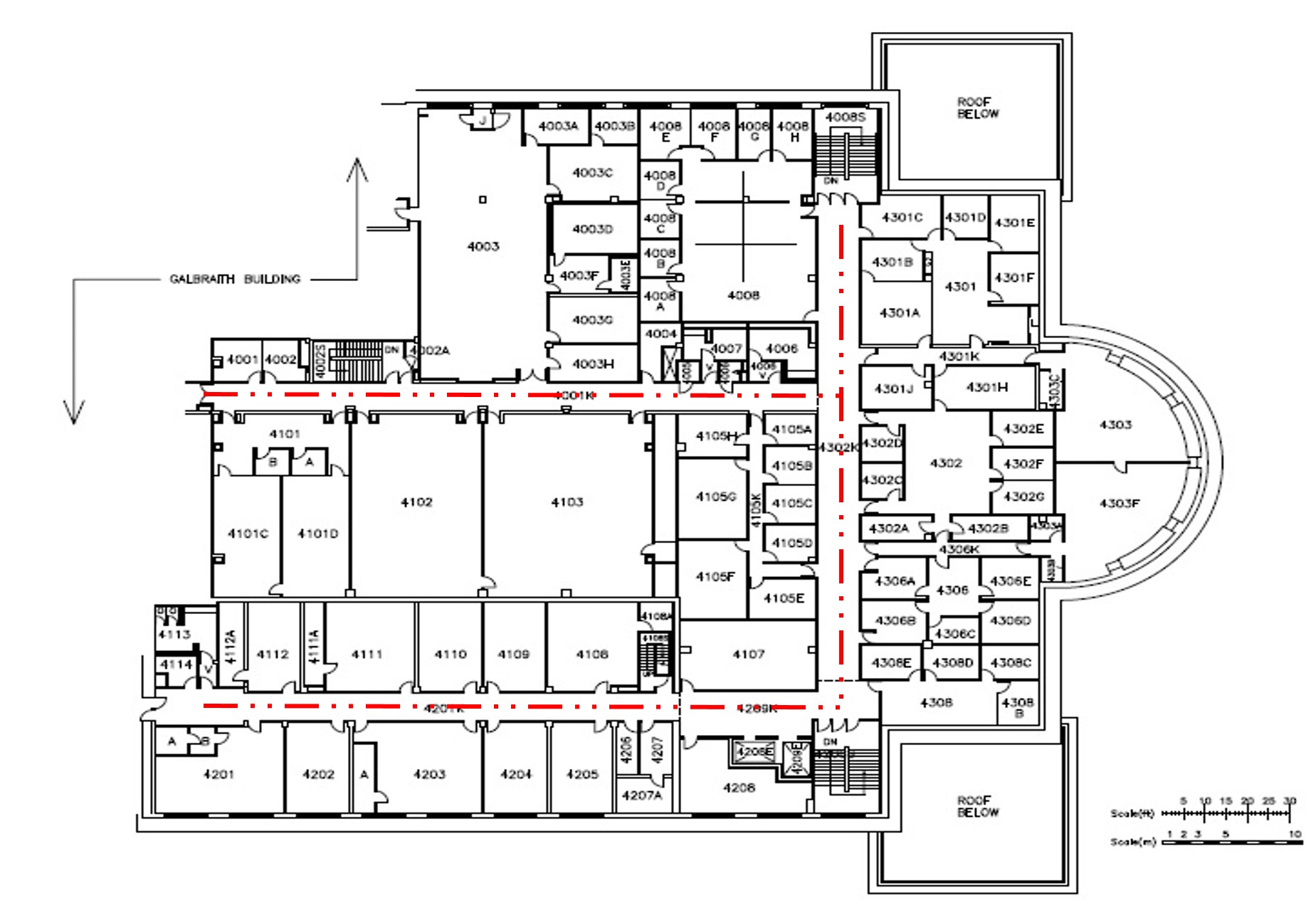}\\
\caption{Scenario I floor plan}
\label{fig:SF4_plan}
\end{figure}
Fig. \ref{fig:SF_samples} shows a number of sample images.
\begin{figure}[!t]
\centering
\includegraphics[width=.45\textwidth,height=0.22\textheight]{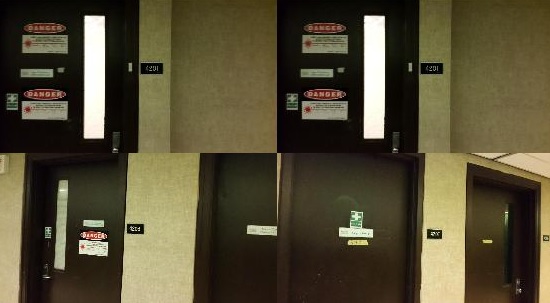}\\
\caption{Scenario I sample images}
\label{fig:SF_samples}
\end{figure}
The location recognition rate (in Liang's method) is about $8 \%$, i.e. 2 correct detections among 51 queries. Hence, we do not include more results from this method in this scenario. 
The low recognition rate demonstrates how weakly image retrieval-based method might perform in the mentioned indoor scenarios.

Table \ref{table: scenario I} compares the results of different methods in terms of recognition rate and mean error of location estimation. 
Furthermore, Fig. \ref{fig:SF_cdf} depicts the CDF (cumulative distribution function) of the localization error for the OCRAPOSE II and Torii's methods. As seen, the OCRAPOSE II demonstrates better localization error compared to that of Torii's.
The reason is Torii's method interpolates the image pair locations hence performs poorly when query is located outside of the database locations, i.e. extrapolation case. However, the proposed method is capable of fine estimation via depth and AOV estimation in all cases including that of extrapolation.

Finally, we downsample the database locations to evaluate the robustness of the methods to coarse databases with fewer locations. In fact, we keep the query points as before and downsample the database locations set. It should be noted that the results of the OCRAPOSE II method are the same for different downsampling factors since it does not rely on the database images. 
Fig. \ref{fig:SF_downsample} shows the average location estimate error for different methods. As seen, the Torii's error is generally increasing with the database downsampling factor. On the contrary, the OCRAPOSE II error is fixed and always less than that of Torii's method.
\begin{table}
\caption{Performance comparison of methods in scenario I}
\begin{center}
    \begin{tabular}{ | l | l | l |}
    \hline
    \bf{Method} & Recognition rate \% & Mean localization error (m)\\ \hline
	Liang's & 7.4 & --- \\ \hline
    Torii's & --- & 0.80 \\ \hline
    OCRAPOSE II & 76 & 0.68 \\ \hline
    \end{tabular}
    \end{center}
    \label{table: scenario I}
\end{table}

\begin{figure}[!t]
\includegraphics[scale=0.6]{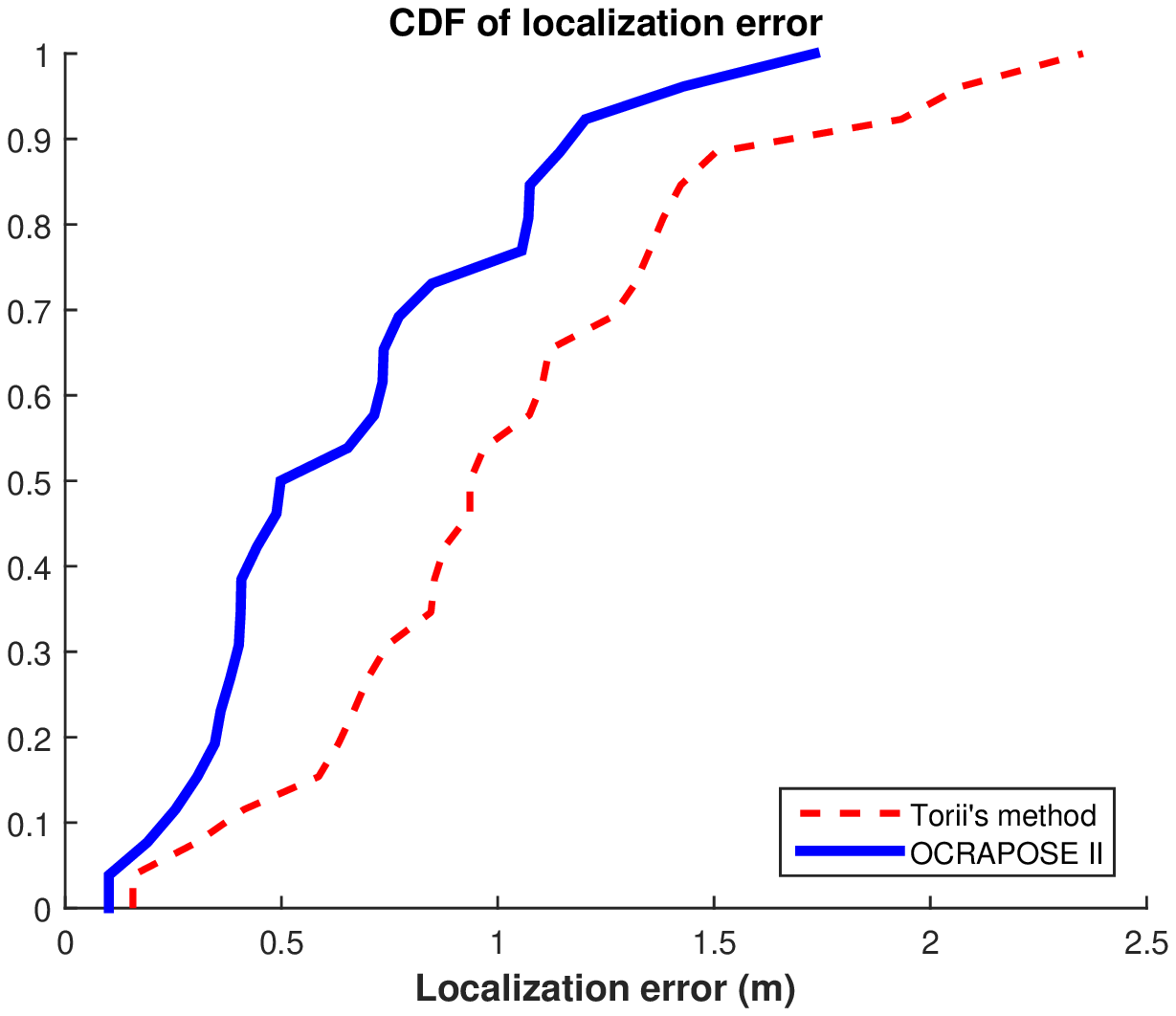}
\caption{location estimation results in scenario I }
\label{fig:SF_cdf}
\end{figure}

\begin{figure}[!t]
\includegraphics[scale=0.6]{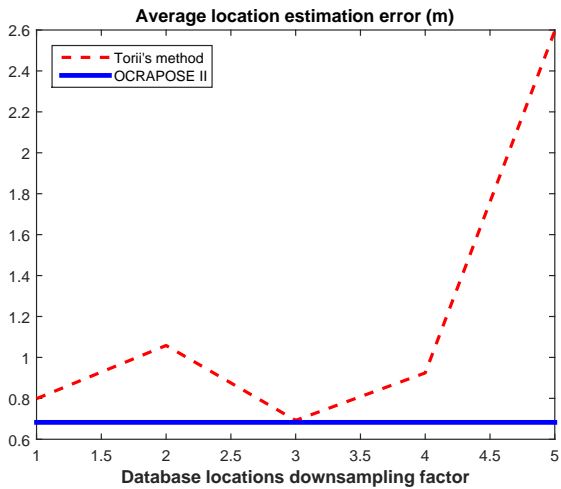}
\caption{Downsampling of the database locations in scenario I}
\label{fig:SF_downsample}
\end{figure}

We also performed the performance comparison in another university building scenario with larger number of locations. In the scenario, we sampled images regularly at 120 locations with a distance of $1$ m using a Google Nexus 4 cellphone.  Depths are in the range of $[1, 2.7] \; m $ and AOVs belong to $[-56\degree, 56 \degree]$. Here, we only show a brief summary of results in Table \ref{table: bahen} in favor of the limited space available. As seen, the location recognition of the proposed system is comparable to Liang's method. But, it demonstrates smaller localization error on average.
The reason is Liang's method only performs location recognition and has no image-based location refinement stage. However, as stated, OCRAPOSE II refines the location estimate in all cases hence outperforms both benchmarks in terms of localization accuracy.

\begin{table}
\caption{Performance comparison of methods in the second university building scenario} 
\begin{center}
    \begin{tabular}{ | l | l | l |}
    \hline
    \bf{Method} & Recognition rate \% &  Mean localization error (m)\\ \hline
    Liang's & 58 & 1.11 \\ \hline
    Torii's & --- & 1.44  \\ \hline
    OCRAPOSE II & 61 & 0.93\\ \hline
    \end{tabular}
    \end{center}
    \label{table: bahen}
\end{table}

\ignore{


In this scenario, there exists dark characters printed on light plates as opposed to the previous one.
We capture one location-tagged image per location for the 120 locations depicted in Fig. \ref{fig:bahen_plan}. Fig. \ref{fig:bahen_samples} shows four sample images taken in this scenario. In fact, we capture images at equal distances of $1~m$ using an ASUS Transformer TF700T tablet.
In this scenario, depths are in the range $[1, 2.7] \; m $ and AOVs belong to $[-56\degree, 56 \degree]$.
\begin{figure}[!t]
\centering
\includegraphics[scale=0.3]{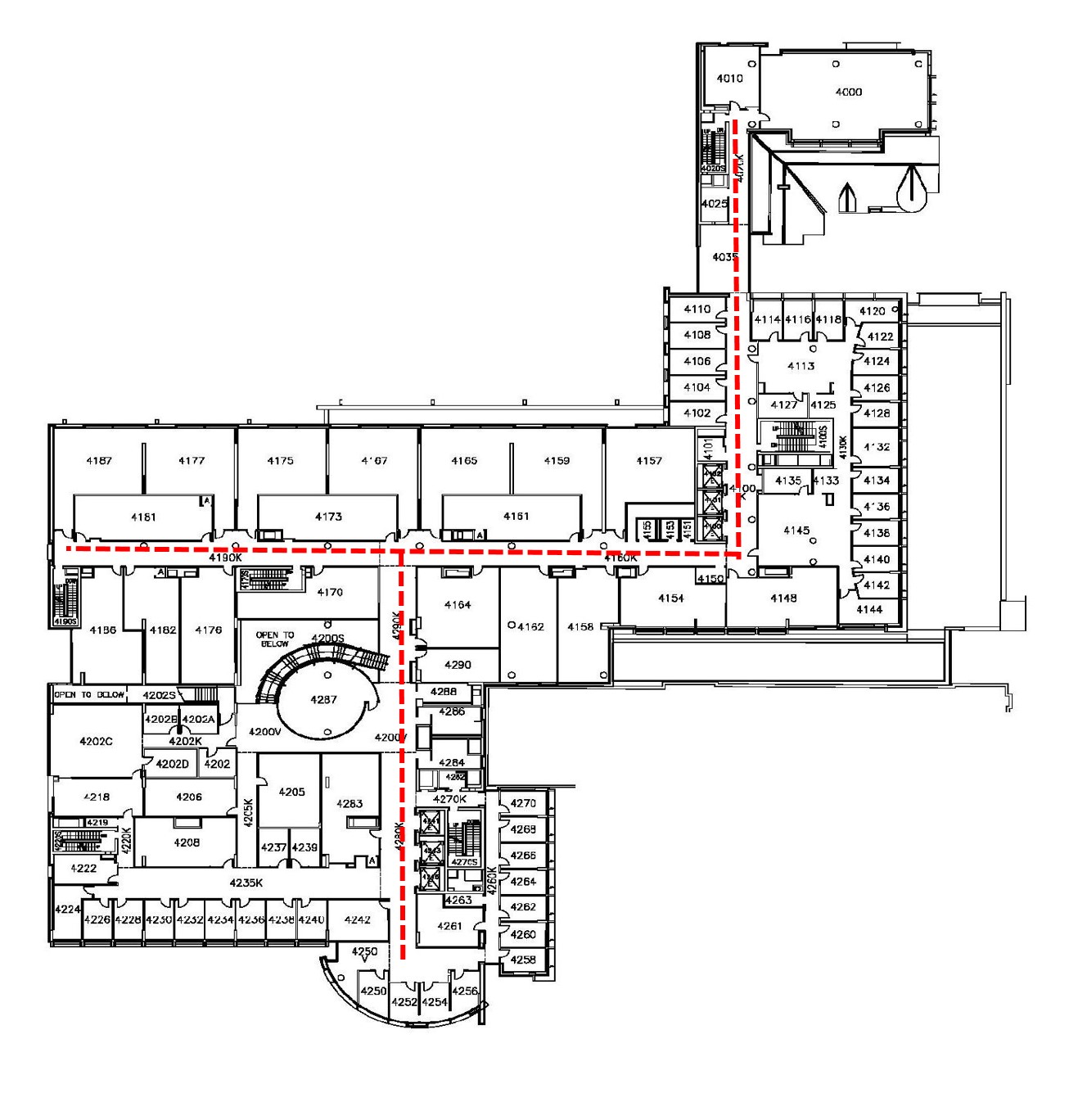}\\
\caption{Scenario II floor plan}
\label{fig:bahen_plan}
\end{figure}
\begin{figure}[!t]
\centering
\includegraphics[scale=0.4]{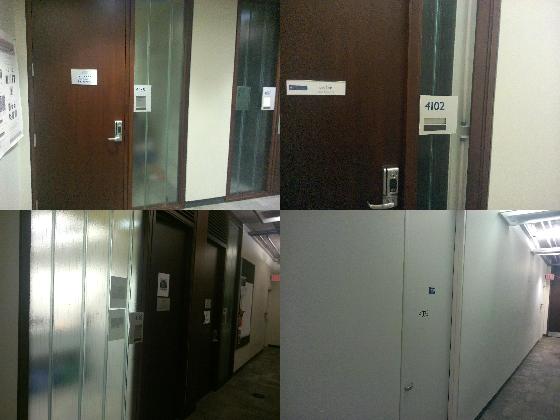}\\
\caption{Scenario II sample images}
\label{fig:bahen_samples}
\end{figure}

We select even indexed images as database and the odd-indexed as queries. In this case, if Liang's method is applied, there will be an offset error of $1~m$ since the closest location in the database is at least $1~m$ away. But, for the OCRAPOSE II and the Torii's methods, there is the possibility of achieving zero localization error.


Fig. \ref{fig:bahen_cdf} shows the CDF of the localization error for the all methods. As seen, the proposed method demonstrates smaller mean/median localization error compared to the benchmarks. 

Table \ref{table: scenario I} lists a summary of location recognition and estimation results for all methods.
As seen, the recognition rate of the proposed method is close to that of Liang's methods.
 It should be noted that the recognition rate of the proposed method depends on the detection and recognition of characters and can be improved by fine tuning and selecting better methods tailored for the scenario of interest. On the contrary, the image retrieval-based methods are performing poorly, mainly due to the repeated scenery present in the mentioned scenarios. In terms of mean error, Liang's method performs very well with zero median error. But, the proposed method performs as good as Torii's method.

\begin{table}
\caption{Performance comparison of methods in scenario I} 
\begin{center}
    \begin{tabular}{ | l | l | l |}
    \hline
    \bf{Method} & Location recognition rate \% &  Mean localization error (m)\\ \hline
    Liang's & 58 & 1.11 \\ \hline
    Torii's & --- & 1.44  \\ \hline
    OCRAPOSE II & 61 & 0.93\\ \hline
    \end{tabular}
    \end{center}
    \label{table: scenario I}
\end{table}

\begin{figure}[!t]
\centering
\includegraphics[scale=0.6]{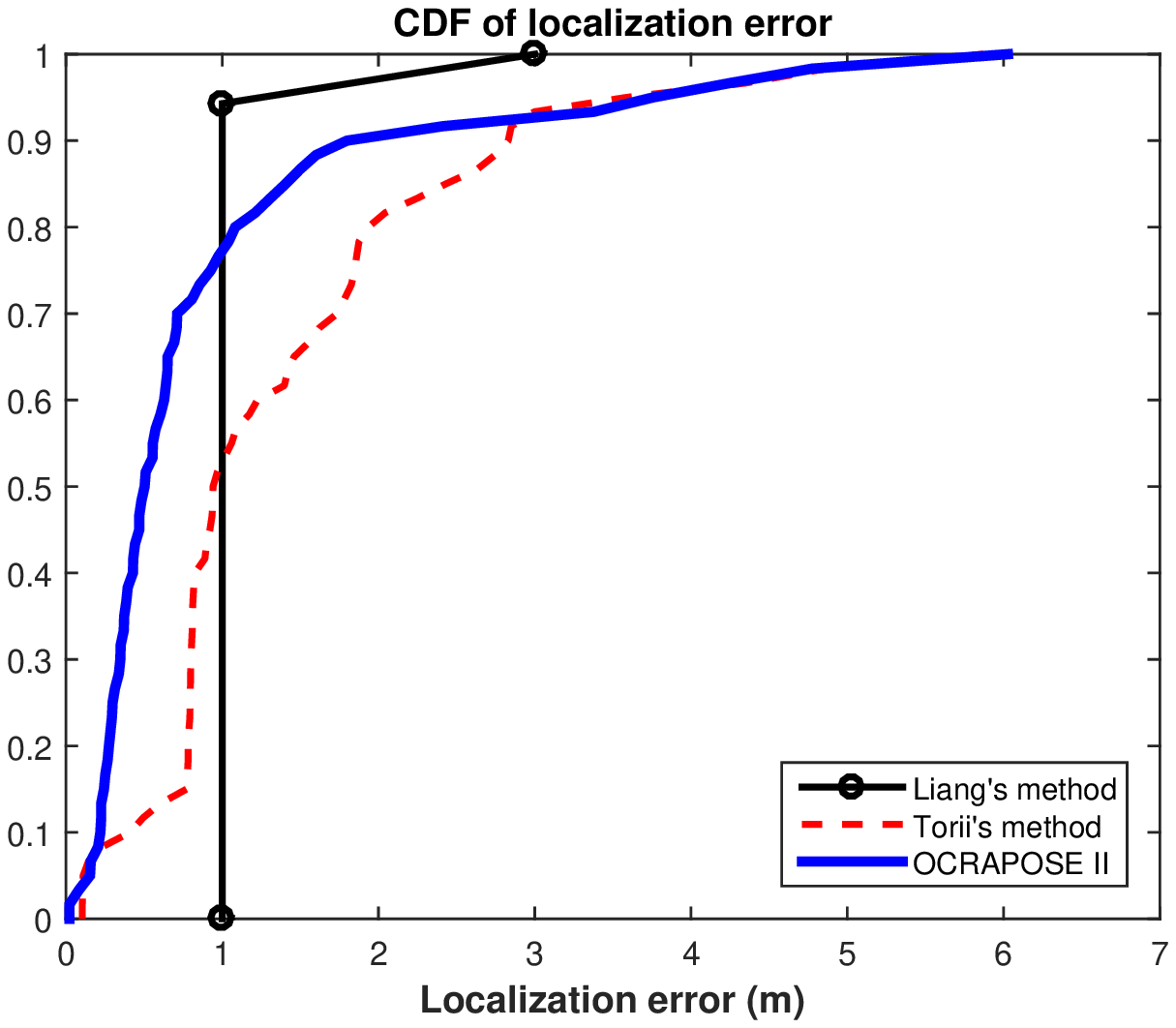}\\
\caption{Location estimation results in scenario I}
\label{fig:bahen_cdf}
\end{figure}

\begin{figure}[!t]
\centering
\includegraphics[scale=0.6]{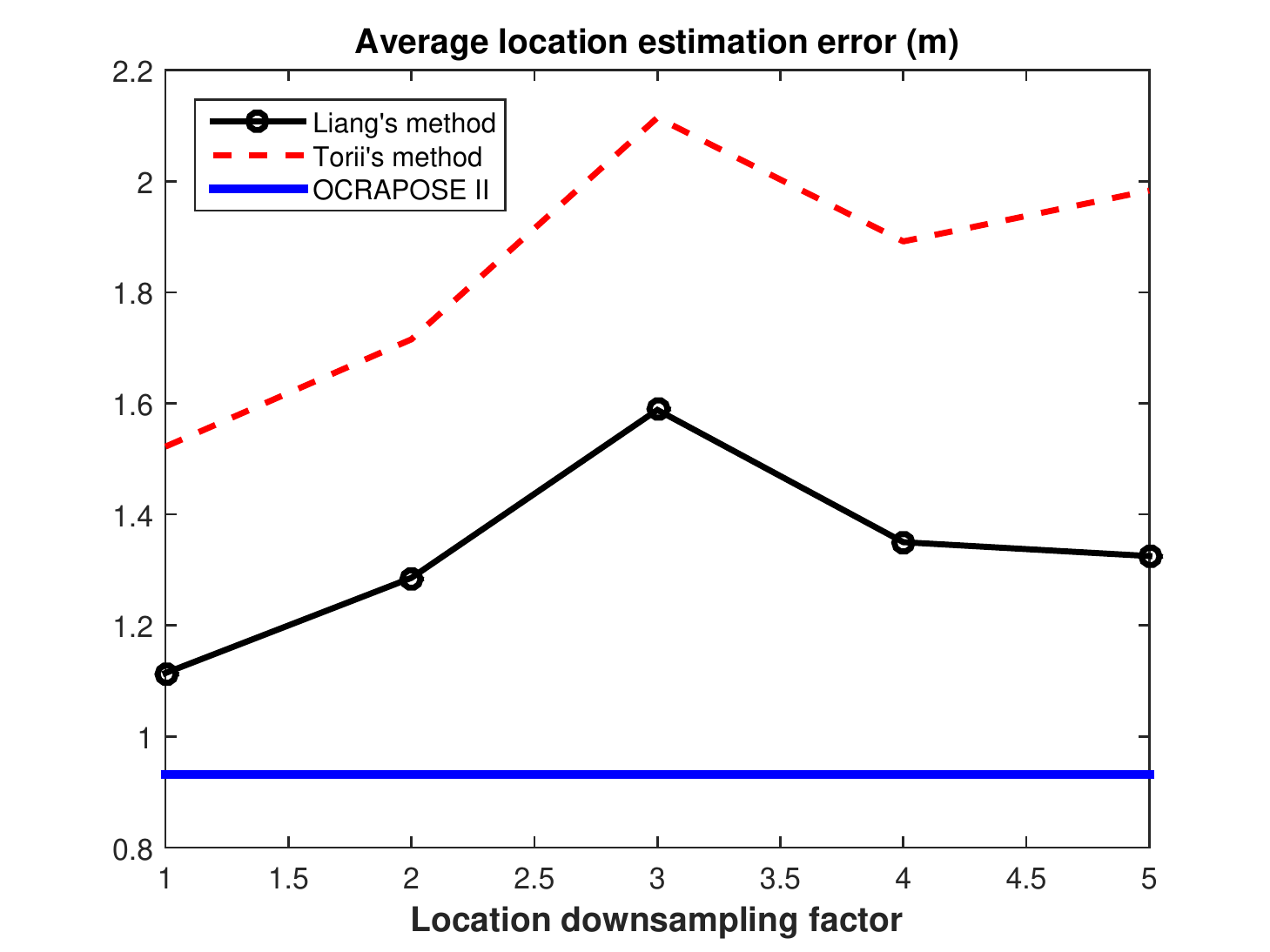}\\
\caption{Downsampling of the database locations in scenario II}
\label{fig:bahen_downsample}
\end{figure}

}

\subsection{Scenario II - Parking}
As stated before, this scenario is designed to compare the location estimation error of the methods only.
In fact, only the word \emph{East}, as depicted in Fig \ref{fig:east}, is used for localization. The actual width ($W$) of the word is $95.4$ cm.
Fig. \ref{fig:east_samples} depicts sample images taken in this scenario. 
Images are taken from three angle-of-views of $0$, $+45 \degree$ and $-45 \degree$. Depths belong to the interval of $[1, 40] ~m$.

Table \ref{table: scenario II} shows a summary of the location estimation error results. As seen, the OCRAPOSE II is outperforming benchmarks in terms of location accuracy with a large gap. It is due to existence of a larger OCR box in this scenario compared to previous ones. It leads to lower relative error in the estimation of the box width. Fig. \ref{fig:parking_cdf} is comparing the localization error results. As seen, OCRAPOSE II demonstrates smaller average localization errors.
 Moreover, Fig. \ref{fig:parking_downsample} shows the effect of database locations downsampling on the average localization error. As depicted, the OCRAPOSE II demonstrates much less error compared to the benchmarks.

The studied scenarios demonstrate the applicability of the proposed OCR-based method. The assumptions we made regarding the alignment of characters and  relative position and orientation of the user with respect to the characters usually hold in practice. Furthermore, the size of the conventional characters is large enough so that they can be recognized with high probability even at the furthest possible locations of the user. In fact, texts and gate numbers are bigger in larger areas such as airports, parkings, etc., where user might be tens of meters away. Based on our experiments, each character is seen at least with a width of $10 \sim 15$ pixels in the query image, which is large enough for the OCR engines to recognize.

\begin{figure}[!t]
\centering
\includegraphics[scale=0.3]{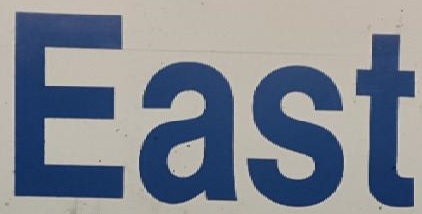}\\
\caption{The characters present in the scenario II}
\label{fig:east}
\end{figure}

\begin{figure}[!t]
\centering
\includegraphics[scale=0.6]{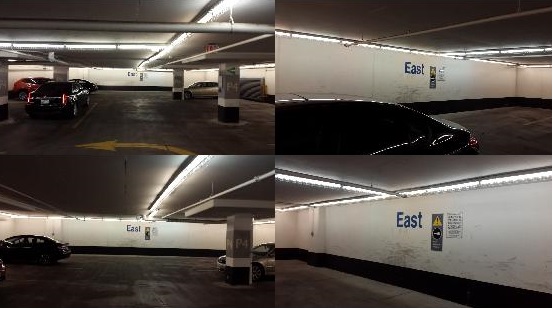}\\
\caption{Sample images from the scenario II}
\label{fig:east_samples}
\end{figure}

\begin{table}
\caption{Performance comparison of methods in scenario II} 
\begin{center}
    \begin{tabular}{ | l | l | l |}
    \hline
    \bf{Method} & Mean localization error (m)\\ \hline
    Liang's & 2.08 \\ \hline
    Torii's & 1.47  \\ \hline
    OCRAPOSE II & 0.51\\ \hline
    \end{tabular}
    \end{center}
    \label{table: scenario II}
\end{table}

\begin{figure}[!t]
\centering
\includegraphics[scale=0.6]{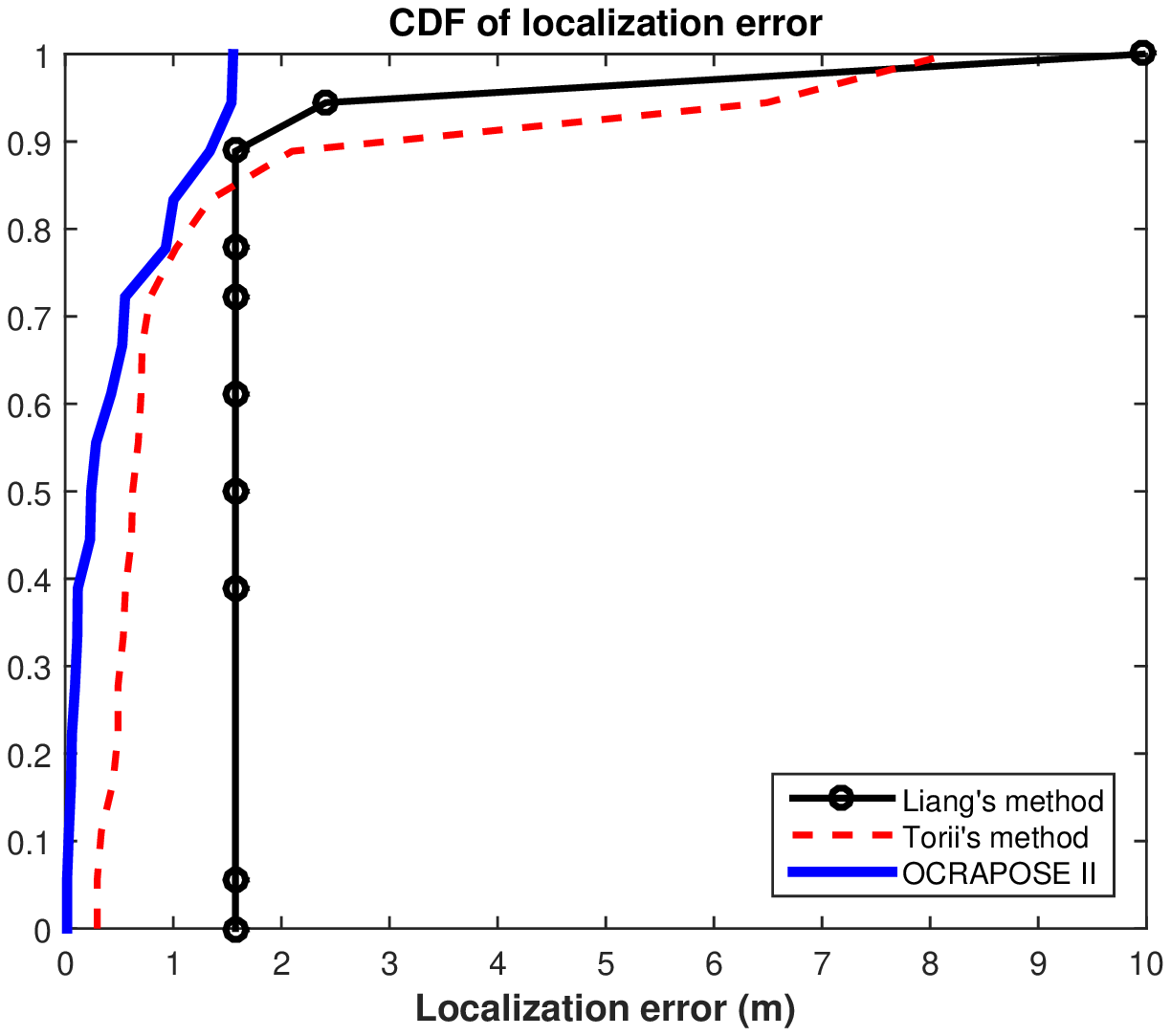}\\
\caption{location estimation results in scenario II}
\label{fig:parking_cdf}
\end{figure}

\begin{figure}[!t]
\centering
\includegraphics[scale=0.6]{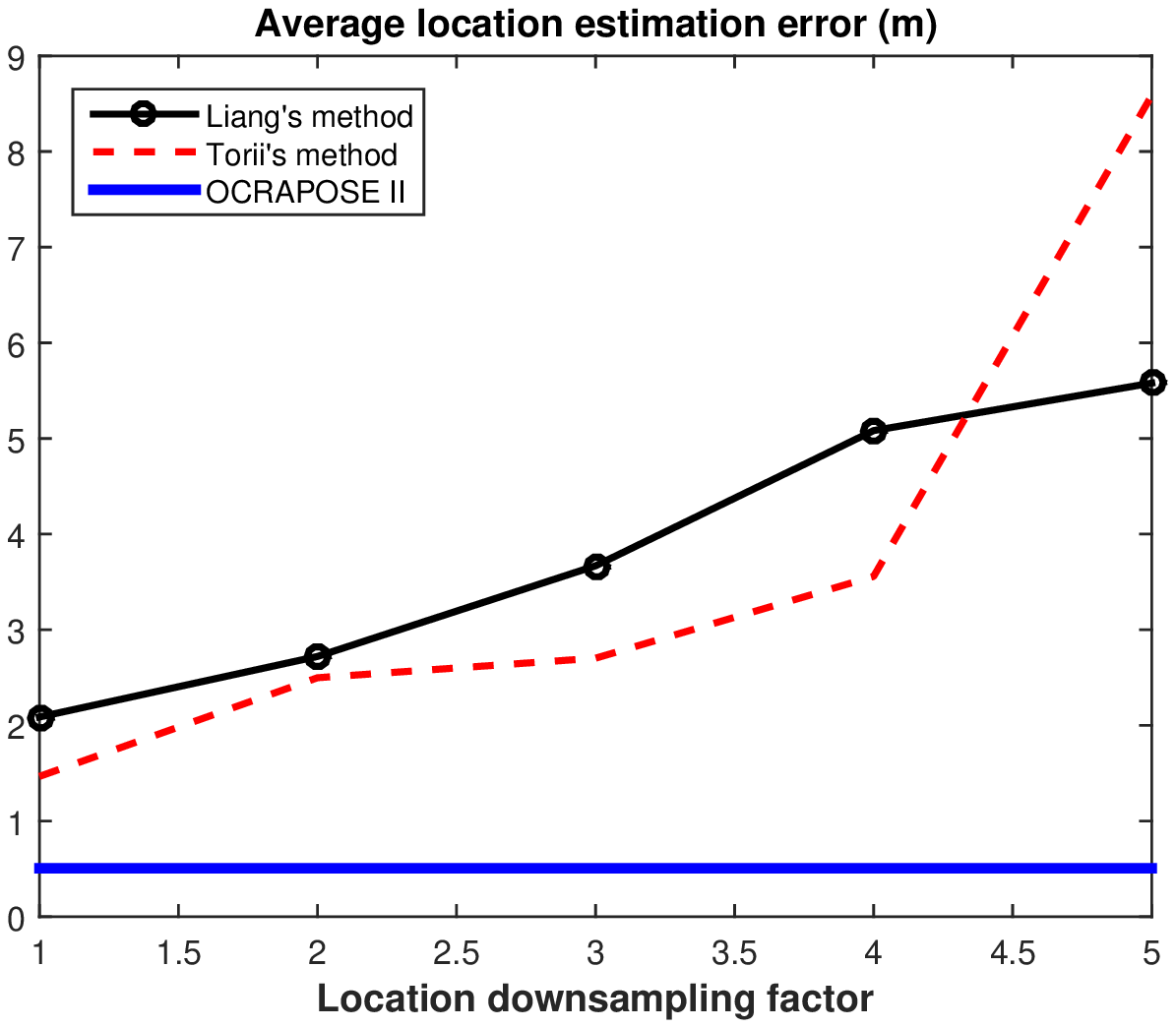}\\
\caption{Downsampling of the database locations in scenario II}
\label{fig:parking_downsample}
\end{figure}

\section{Conclusion}
\label{sec:conclusion}

In this paper, we discuss a number of indoor scenarios, which are challenging for the existing localization methods and propose using OCR to recognize characters as suitable location distinctive landmarks. A novel system is proposed that utilizes OCR to perform rough localization. 
Two novel formulas are also proposed for angle-of-view and depth estimation, which are used to refine the location estimate.
Our experiments demonstrate that the proposed OCR-based system achieves better performance compared with the state-of-the-art localization methods in terms of location recognition rate and average localization error. It is also shown that benchmarks performance degrades as database locations set becomes sparser, while the performance of the proposed system is independent of the database sparsity and remains constant.

\bibliographystyle{IEEEtran}
\bibliography{journal}

\ignore{
\ifCLASSOPTIONcompsoc
  \section*{Acknowledgments}
\else
  \section*{Acknowledgment}
\fi

The authors would like to thank...
}

\ifCLASSOPTIONcaptionsoff
  \newpage
\fi

\ignore{
\begin{IEEEbiography}{Michael Shell}
Biography text here.
\end{IEEEbiography}

\begin{IEEEbiographynophoto}{John Doe}
Biography text here.
\end{IEEEbiographynophoto}


\begin{IEEEbiographynophoto}{Jane Doe}
Biography text here.
\end{IEEEbiographynophoto}
}




\end{document}